\newcommand{\scalbijection}{h}
\newcommand{\actfunction}{h}
\newcommand{\obsvari}{Y}
\newcommand{\obsvar}{\mathbf{\obsvari}}
\newcommand{\obsvali}{y}
\newcommand{\obsval}{\mathbf{\obsvali}}
\newcommand{\obsset}{\mathcal{\obsvari}}
\newcommand{\obsdensity}{p_\obsvar}
\newcommand{\obsdata}{\mathcal{D}}
\newcommand{\obsnum}{M}
\newcommand{\couplingfunc}{\mathbf{\scalbijection}}
\newcommand{\couplingfunci}{\scalbijection}
\newcommand{\flowvali}{x}
\newcommand{\flowval}{\mathbf{\flowvali}}
\newcommand{\flowdim}{D}
\newcommand{\flownum}{N}
\newcommand{\flowfunc}{\mathbf{g}}
\newcommand{\flowinv}{\mathbf{f}}
\newcommand{\jac}[1]{\textrm{D}#1}
\newcommand{\flowfuncJ}{{\jac{\flowfunc}}}
\newcommand{\flowinvJ}{{\jac{\flowinv}}}
\newcommand{\flowparams}{\theta}
\newcommand{\basevari}{Z}
\newcommand{\basevar}{\mathbf{\basevari}}
\newcommand{\basevali}{z}
\newcommand{\baseval}{\mathbf{\basevali}}
\newcommand{\baseset}{\mathcal{\basevari}}
\newcommand{\basedensity}{p_\basevar}
\newcommand{\baseparams}{\phi}
\newcommand{\R}{\mathbb{R}}
\newcommand{\E}{\mathbb{E}}
\newcommand{\bigO}{\mathcal{O}}
\newcommand{\diag}{\textrm{diag}}
\newcommand{\tr}{\textrm{Tr}}
\newcommand{\KL}{D_{KL} }
\newcommand{\placeholder}{\boldsymbol{\cdot} }
\newcommand{\NN}{\text{NN} }
\newcommand{\eg}{\emph{e.g.}}
\newcommand{\ie}{\emph{i.e.}}
\newcommand{\BlackBox}{\rule{1.5ex}{1.5ex}}  
\newenvironment{proof}{\par\noindent{\bf Proof\ }}{\hfill\BlackBox\\[2mm]}
\newtheorem{theorem}{Theorem}
\newtheorem{proposition}[theorem]{Proposition} 
\newtheorem{remark}[theorem]{Remark}
\newtheorem{definition}[theorem]{Definition}
\begin{document}

\title{Normalizing Flows: An Introduction and Review of Current Methods}
%
%
%
%

\author{Ivan~Kobyzev,~\IEEEmembership{}
        Simon~J.D.~Prince,~\IEEEmembership{}
        and~Marcus~A.~Brubaker.~\IEEEmembership{Member, IEEE}
\IEEEcompsocitemizethanks{\IEEEcompsocthanksitem Borealis AI, Canada.\protect\\
 ivan.kobyzev@borealisai.com\protect\\
simon.prince@borealisai.com\protect\\
mab@eecs.yorku.ca}
}

%
%

\markboth{IEEE Transactions on Pattern Analysis and Machine Intelligence}%
{Kobyzev \MakeLowercase{\textit{et al.}}: Normalizing Flows: An Introduction and Review of Current Methods}

%



\IEEEtitleabstractindextext{%
\begin{abstract}
Normalizing Flows are generative models which produce tractable distributions where both sampling and density evaluation can be efficient and exact. The goal of this survey article is to give a coherent and comprehensive review of the literature around the construction and use of Normalizing Flows for distribution learning. We aim to provide context and explanation of the models, review current state-of-the-art literature, and identify open questions and promising future directions.
\end{abstract}

\begin{IEEEkeywords}
Generative models, Normalizing flows, Density estimation, Variational inference, Invertible neural networks.
\end{IEEEkeywords}}

\maketitle


\IEEEdisplaynontitleabstractindextext

%
\IEEEpeerreviewmaketitle


%
%
%
%

\IEEEraisesectionheading{\section{Introduction}\label{sec:introduction}}
\IEEEPARstart{A}{major}  goal of statistics and machine learning has been to model a probability distribution given samples drawn from that distribution.  This is an example of unsupervised learning and is sometimes called generative modelling.  Its importance derives from the relative abundance of unlabelled data compared to labelled data.  Applications include density estimation, outlier detection,  prior construction, and dataset summarization.

 Many methods for generative modeling have been proposed.  Direct analytic approaches approximate observed data with a fixed family of distributions. Variational approaches and expectation maximization introduce latent variables to explain the observed data. They provide additional flexibility but can increase the complexity of learning and inference.  Graphical models \citep{graphModels} explicitly model the conditional dependence between random variables.  Recently, generative neural approaches have been proposed including generative adversarial networks (GANs) \citep{Goodfellow2014GenerativeAN} and variational auto-encoders (VAEs) \citep{Kingma2014AutoEncodingVB}.

GANs and VAEs have demonstrated impressive performance results on challenging tasks such as learning distributions of natural images. However, several issues limit their application in practice. Neither allows for exact evaluation of the probability density of new points. Furthermore, training can be challenging due to a variety of phenomena including mode collapse, posterior collapse, vanishing gradients and training instability \citep{Bowman2015GeneratingSF, Salimans2016ImprovedTF}. 
 
Normalizing Flows (NF) are a family of generative models with tractable distributions where both sampling and density evaluation can be efficient and exact.  Applications include image generation \citep{Kingma2018, ho2019flow}, noise modelling \citep{Abdelhamed2019}, video generation \citep{videoflow}, audio generation \citep{flowwavenet, waveglow, esling2019}, graph generation \citep{Madhawa2019}, reinforcement learning \citep{Ward2019RL, Touati2019, Mazoure2019}, computer graphics \citep{Muller2018NeuralSampling}, and physics \citep{Noe2019, Koller2019Eq, KazeWong2020, Wirnsberger2020, Kanwar2020}. 
 
There are several survey papers for VAEs \citep{Kingma2019IntroVAE} and GANs \citep{Creswell2018GenerativeAN, Wang2017GenerativeAN}.  This article aims to provide a comprehensive review of the literature around Normalizing Flows for distribution learning. Our goals are to
  1) provide context and explanation to enable a reader to become familiar with the basics,
  2) review the current literature, and 3) identify open questions and promising future directions.
  Since this article was first made public, an excellent complementary treatment has been provided by \cite{FlowSurvey2019}.  Their article is more tutorial in nature and provides many details concerning implementation,
  whereas our treatment is more formal and focuses mainly on the families of flow models.

In Section \ref{background}, we introduce Normalizing Flows and describe how they are trained. In Section \ref{Methods} we review constructions for Normalizing Flows. In Section \ref{performance} we describe datasets for testing Normalizing Flows and discuss the performance of different approaches. Finally, in Section \ref{discussion} we discuss open problems and possible research directions.

\section{Background}\label{background}
Normalizing Flows were popularised by \citet{Rezende2015} in the context of variational inference and by \citet{Dinh2015} for density estimation.  However, the framework was previously defined in \citet{Tabak2010} and \citet{Tabak2013}, and explored for clustering and classification \citep{Agnelli2010}, and density estimation \citep{Rippel2013, Laurence2014}.

A Normalizing Flow is a transformation of a simple probability distribution (\eg, a standard normal) into a more complex distribution by a sequence of invertible and differentiable mappings. The density of a sample can be evaluated by transforming it back to the original simple distribution and then computing the product of i) the density of the inverse-transformed sample under this distribution and ii) the associated change in volume induced by the sequence of inverse transformations.   The change in volume is the product of the absolute values of the determinants of the Jacobians for each transformation, as required by the change of variables formula.  

The result of this approach is a mechanism to construct new families of distributions by choosing an initial density and then chaining together some number of parameterized, invertible and differentiable transformations. The new density can be sampled from (by sampling from the initial density and applying the transformations) and the density at a sample (\ie, the likelihood) can be computed as above.

\subsection{Basics}

 Let $\basevar \in \R^\flowdim$ be a random variable with a known and tractable probability density function $\basedensity : \R^\flowdim \rightarrow \R$. Let $\flowfunc$ be an invertible function and $\obsvar = \flowfunc(\basevar)$. Then using the change of variables formula, one 
 can compute the probability density function of the random variable $\obsvar$:
 \begin{align}
\label{maxloglike}
 \obsdensity(\obsval) &
 =  \basedensity(\flowinv(\obsval)) \left| \det \flowinvJ(\obsval) \right|\nonumber \\
& =  \basedensity(\flowinv(\obsval)) \left| \det \flowfuncJ(\flowinv(\obsval)) \right|^{-1} ,
\end{align}
 where $\flowinv$ is the inverse of $\flowfunc$,  $\flowinvJ(\obsval) = \frac{\partial \flowinv} {\partial \obsval}$ is the Jacobian of $\flowinv$ and $\flowfuncJ(\baseval) = \frac{\partial \flowfunc} {\partial \baseval}$ is the Jacobian of $\flowfunc$.
 This new density function $\obsdensity(\obsval)$ is called a \textit{pushforward} of the density $\basedensity$ by the function $\flowfunc$ and denoted by $\flowfunc_* \basedensity $ (Figure \ref{fig:change_of_var}).

 \begin{figure}[t!]
\centering
\includegraphics[scale=0.34]{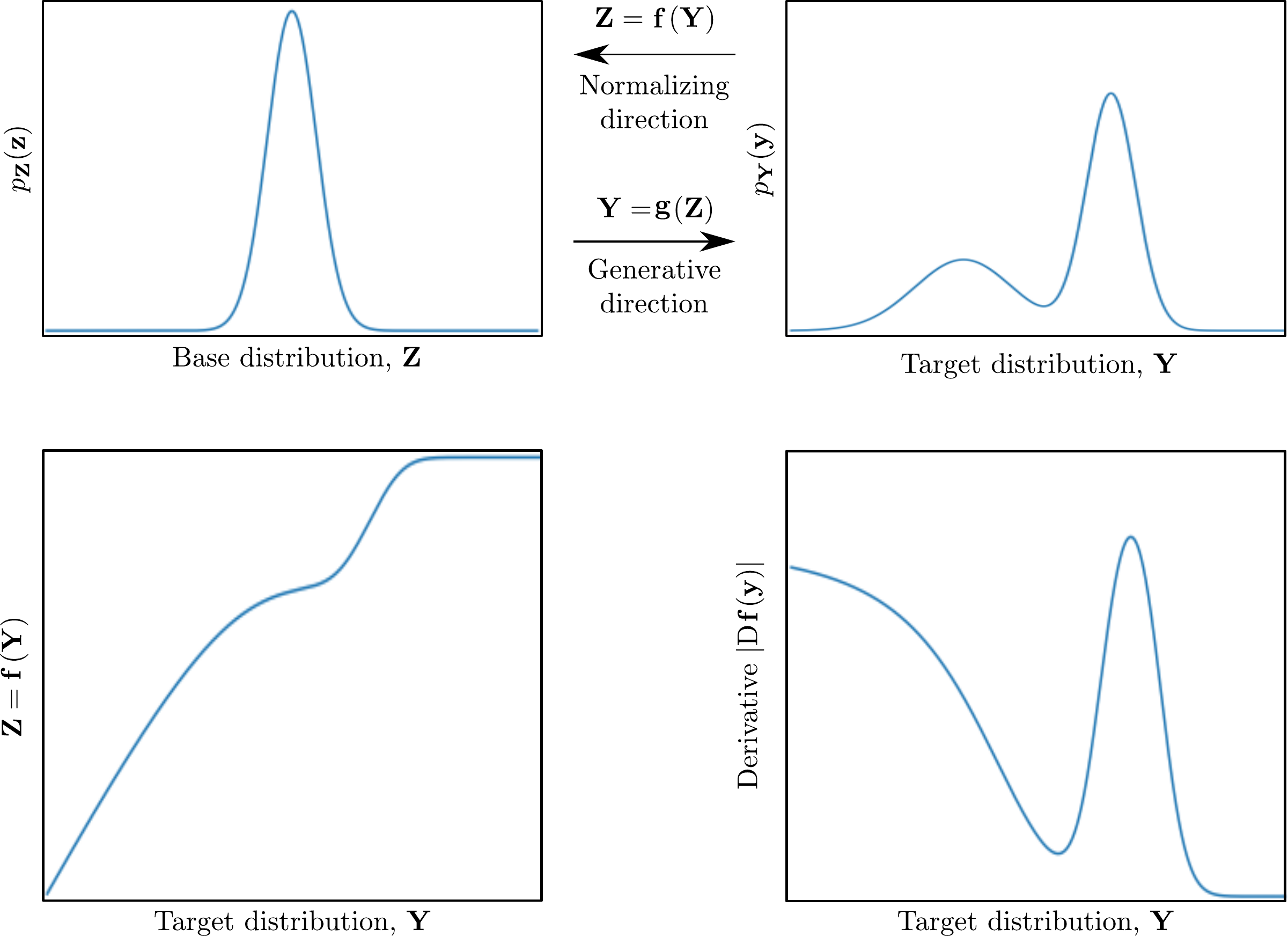}
\caption{Change of variables (Equation \eqref{maxloglike}). Top-left: the density of the source $\basedensity$. Top-right: the density function of the target  distribution $\obsdensity(\obsval)$. There exists a bijective function $\flowfunc$, such that $\obsdensity = \flowfunc_* \basedensity $, with inverse $\flowinv$.
Bottom-left: the inverse function $\flowinv$. Bottom-right:  the absolute Jacobian (derivative) of $\flowinv$.}
\label{fig:change_of_var}
\end{figure}

In the context of generative models, 
the above function $\flowfunc$ (a generator) ``pushes forward" the base density $\basedensity$ (sometimes referred to as the ``noise'')  to a more complex density. This movement from base density to final complicated density is the \textit{generative direction}. Note that to generate a data point $\obsval$, one can sample $\baseval$ from the base distribution, and then apply the generator: $\obsval = \flowfunc(\baseval) $.

The inverse function $\flowinv$ moves (or ``flows'') in the opposite, \textit{normalizing direction}: from a complicated and irregular data distribution towards the simpler, more regular or ``normal'' form, of the base measure $\basedensity$.
This view is what gives rise to the name ``normalizing flows'' as $\flowinv$
is ``normalizing'' the data distribution.
This term is doubly accurate if the base measure $\basedensity$ is chosen as a Normal distribution as it often is in practice.

Intuitively, if the transformation $\flowfunc$ can be arbitrarily complex, one can generate any distribution $\obsdensity$ from any base distribution $\basedensity$ under reasonable assumptions on the two distributions.
This has been proven formally \citep{transportation, triangular2005, triangular2008}. See Section \ref{s:universal}.

Constructing arbitrarily complicated non-linear invertible functions (bijections) can be difficult.
By the term \textit{Normalizing Flows} people mean bijections which are convenient to compute, invert, and calculate the determinant of their Jacobian.
One approach to this is to note that the composition of invertible functions is itself invertible and the determinant of its Jacobian has a specific form.
In particular, let $\flowfunc_1, \dots, \flowfunc_\flownum$ be a set of $\flownum$ bijective functions and define $\flowfunc = \flowfunc_\flownum \circ \flowfunc_{\flownum-1} \circ \dots \circ \flowfunc_1$  to be the composition of the functions.
Then it can be shown that $\flowfunc$ is also bijective, with inverse:
\begin{equation}
\flowinv = \flowinv_1 \circ \dots \circ \flowinv_{\flownum-1} \circ \flowinv_\flownum,
\end{equation}
and the determinant of the Jacobian is 
\begin{equation}
\det \flowinvJ(\obsval) = \prod_{i=1}^\flownum \det \flowinvJ_i(\flowval_{i}),
\end{equation}
where
$\flowinvJ_i(\obsval) = \frac{\partial \flowinv_i} {\partial \flowval}$ is the Jacobian of $\flowinv_i$.  We denote the value of the $i$-th intermediate flow as
$\flowval_i = \flowfunc_i \circ \dots \circ \flowfunc_1(\baseval) = \flowinv_{i+1} \circ \dots \circ \flowinv_{\flownum}(\obsval)$ and so $\flowval_{\flownum} = \obsval$. Thus, a set of nonlinear bijective functions can be composed to construct successively more complicated functions.

\subsubsection{More formal construction}
\label{formal_def}
In this section we explain normalizing flows from more formal perspective. Readers unfamiliar with measure theory can safely skip to Section \ref{s:applications}. First, let us recall the general definition of a pushforward.

\begin{definition}
\label{d:pushforward}
If $(\baseset,\Sigma_\baseset)$, $(\obsset,\Sigma_\obsset) $ are measurable spaces,   $\flowfunc$ is a measurable mapping between them, and $\mu$ is a measure on $\baseset$,  then one can define a measure on $\obsset$ (called the pushforward measure and denoted by $\flowfunc_* \mu$) by the formula
\begin{equation}
\flowfunc_* \mu (U) = \mu (\flowfunc^{-1}(U) ), \quad \text{for all} \ U \in \Sigma_\obsset.
\end{equation}
\end{definition}
This notion gives a general formulation of a generative model.  Data can be understood as a  sample from  a measured ``data'' space $(\obsset,\Sigma_\obsset, \nu) $, which we want to learn. To do that  one can introduce a simpler measured space $(\baseset,\Sigma_\baseset, \mu)$ and find a function $\flowfunc: \baseset \to \obsset$, such that $\nu = \flowfunc_*\mu $. This function $\flowfunc$ can be interpreted as a ``generator", and $\baseset $ as a latent space. This view puts generative models in the context of transportation theory \citep{transportation}.

In this survey we will assume that $\baseset = \R^\flowdim$,  all sigma-algebras are Borel, and all measures are absolutely continuous  with respect to Lebesgue measure (\ie, $\mu = \basedensity \mathbf{d z}  $).

\begin{definition}
\label{d:diffeo}
A function $\flowfunc : \R^\flowdim \rightarrow \R^\flowdim$ is called a diffeomorphism, if it is bijective, differentiable, and its inverse is differentiable as well. 
\end{definition}

The pushforward of an absolutely continuous measure  $\basedensity \mathbf{d z}$ by a diffeomorphism  $\flowfunc$ is also absolutely continuous with a density function given by Equation \eqref{maxloglike}. 
Note that this more general approach is important for studying generative models on non-Euclidean spaces (see Section \ref{s:nonEuclid}).

\begin{remark}
It is common in the normalizing flows literature to simply refer to diffeomorphisms as ``bijections'' even though this is formally incorrect.
In general, it is not necessary that $\flowfunc$ is everywhere differentiable;  rather it is sufficient that it is differentiable only almost everywhere with respect to the Lebesgue measure on $\R^\flowdim $.
This allows, for instance, piecewise differentiable functions to be used in the construction of $\flowfunc$.
\end{remark}

\subsection{Applications}\label{s:applications}
\subsubsection{Density estimation and sampling}
\label{s:Density_Estimation}
The natural and most obvious use of normalizing flows is to perform density estimation.
For simplicity assume that only a single flow, $\flowfunc$, is used and it is parameterized by the vector $\flowparams$.
Further, assume that the base measure, $\basedensity$ is given and is parameterized by the vector $\baseparams$.
Given a set of data observed from some complicated distribution, $\obsdata = \{ \obsval^{(i)} \}_{i=1}^{\obsnum}$, we can then perform likelihood-based estimation of the parameters $ \Theta=(\flowparams,\baseparams)$.
The data likelihood in this case simply becomes
\begin{eqnarray}
\label{maxloglike2}
\log p(\obsdata | \Theta)
& = & \sum_{i=1}^{\obsnum} \log \obsdensity(\obsval^{(i)} | \Theta) \\
& = & \sum_{i=1}^{\obsnum} \log \basedensity(\flowinv(\obsval^{(i)} | \flowparams) | \baseparams) + \log \left| \det \flowinvJ(\obsval^{(i)} | \flowparams) \right| \nonumber 
\end{eqnarray}
where the first term is the log likelihood of the sample under the base measure and the second term, sometimes called the log-determinant or volume correction, accounts for the change of volume induced by the transformation of the normalizing flows (see Equation \eqref{maxloglike}). During training, the parameters of the flow ($\flowparams$) and of the base distribution ($\baseparams$) are adjusted to maximize the log-likelihood. 

Note that evaluating the likelihood of a distribution modelled by a normalizing flow requires computing $\flowinv$ (\ie, the normalizing direction), as well as its log determinant.  The efficiency of these operations is particularly important during training where the likelihood is repeatedly computed.
However, sampling from the distribution defined by the normalizing flow requires evaluating the inverse $\flowfunc$ (\ie, the generative direction).
Thus sampling performance is determined by the cost of the generative direction.
Even though a flow must be theoretically invertible, computation of the inverse may be difficult in practice; hence, for density estimation it is common to model a flow in the normalizing direction (\ie, $\flowinv$). \footnote{To ensure both efficient density estimation and sampling, \cite{Oord2017ParallelWF} proposed an approach called Probability Density Distillation which trains the flow $\flowinv$ as normal and then uses this as a teacher network to train a tractable student network  $\flowfunc$. }

Finally, while maximum likelihood estimation is often effective (and statistically efficient under certain conditions) other forms of estimation can and have been used with normalizing flows.
In particular, adversarial losses can be used with normalizing flow models (\eg, in Flow-GAN \citep{Grover2018}).

\subsubsection{Variational Inference}

Consider a latent variable model $p(\mathbf{x})=\int p(\mathbf{x},\obsval)d\obsval$ where $\mathbf{x}$ is an observed variable and $\obsval $ the latent variable.
The posterior distribution $p( \obsval | \mathbf{x})$ is used when estimating the parameters of the model, but its computation is usually intractable in practice.  One approach is to use variational inference and introduce the approximate posterior $q (\obsval | \mathbf{x}, \flowparams) $ where $\flowparams$ are parameters of the variational distribution.
Ideally this distribution should be as close to the real posterior as possible.
This is done by minimizing the KL divergence $\KL(q (\obsval | \mathbf{x}, \flowparams) ||p(\obsval | \mathbf{x}))$, which is equivalent to maximizing the evidence lower bound $\mathcal{L}(\flowparams) = \E_{q (\obsval | \mathbf{x}, \flowparams)}[\log(p(\obsval , \mathbf{x})) - \log(q (\obsval | \mathbf{x}, \flowparams))]$. The latter optimization can be done with gradient descent; however for that one needs to compute gradients of the form $\nabla_\flowparams \E_{q (\obsval | \mathbf{x}, \flowparams)}[h(\obsval)]$, which is not straightforward.

As was observed by \citet{Rezende2015}, one can reparametrize $q (\obsval| \mathbf{x}, \flowparams) = \obsdensity(\obsval|\flowparams)$ with normalizing flows.
Assume for simplicity, that only a single flow $\flowfunc$ with parameters $\flowparams$ is used, $\obsval = \flowfunc(\baseval|\flowparams)$ and the base distribution $\basedensity(\baseval)$ does not depend on $\flowparams$. 
Then
\begin{eqnarray}
\E_{\obsdensity(\obsval|\flowparams)}[h(\obsval)] = \E_{\basedensity(\baseval)}[h(\flowfunc(\baseval|\flowparams))],
\end{eqnarray}
and  the gradient of the right hand side with respect to $\flowparams$ can be computed.  
This approach generally to computing gradients of an expectation is often called the ``reparameterization trick''.

In this scenario evaluating the likelihood is only required at points which have been sampled. Here the sampling performance and evaluation of the log determinant are the only relevant metrics and computing the inverse of the mapping may not be necessary.
Indeed, the planar and radial flows introduced in \citet{Rezende2015} are not easily invertible (see Section \ref{s:planar_radial}).

\section{Methods}
\label{Methods}
Normalizing Flows should satisfy several conditions in order to be practical.  They should:
\begin{itemize}
    \item be invertible; for sampling we need $\flowfunc$ while for computing likelihood we need $\flowinv$,
    \item be sufficiently expressive to model the distribution of interest,
    \item be computationally efficient, both in terms of computing $\flowinv$ and $\flowfunc$ (depending on the application) but also in terms of the calculation of the determinant of the Jacobian.
\end{itemize}
In the following section, we describe different types of flows and comment on the above properties.
An overview of the methods discussed can be seen in figure \ref{fig:flowverview}.

 \begin{figure}[t!]
\centering
\includegraphics[scale=0.45]{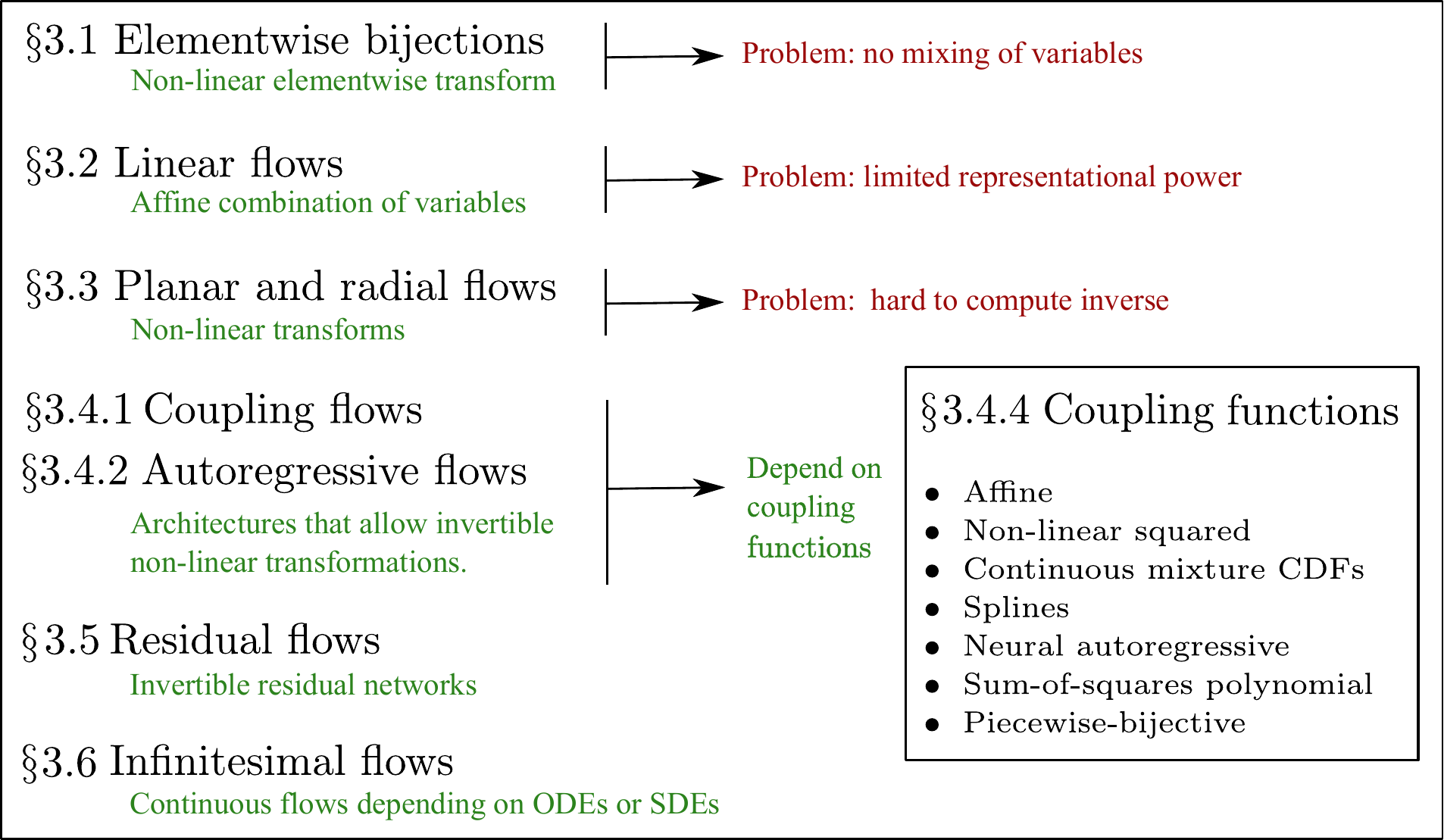}
\caption{Overview of flows discussed in this review.  We start with elementwise bijections, linear flows, and planar and radial flows. All of these have drawbacks and are limited in utility. We then discuss two architectures (coupling flows and autoregressive flows) which support invertible non-linear transformations.
These both use a coupling function, and we summarize the different coupling functions available.
Finally, we discuss residual flows and their continuous extension infinitesimal flows.  }
\label{fig:flowverview}
\end{figure}

\subsection{Elementwise Flows}
A basic form of bijective non-linearity can be constructed given any bijective scalar function.
That is, let $\scalbijection : \R \rightarrow \R$ be a scalar valued bijection.
Then, if $\flowval = (\flowvali_1,\flowvali_2,\dots,\flowvali_\flowdim)^T$,
\begin{equation}
\flowfunc(\flowval) = (\scalbijection(\flowvali_1),\scalbijection(\flowvali_2),\dots,\scalbijection(\flowvali_\flowdim))^T
\end{equation}
is also a bijection whose inverse simply requires computing $\scalbijection^{-1}$ and whose Jacobian is the product of the absolute values of the derivatives of $\scalbijection$.
This can be generalized by allowing each element to have its own distinct bijective function which might be useful if we wish to only modify portions of our parameter vector.
In deep learning terminology, $\scalbijection$, could be viewed as an ``activation function''.
Note that the most commonly used activation function ReLU is not bijective and can not be directly applicable,
however, the (Parametric) Leaky ReLU \citep{Maas2013,He2015} can be used instead among others.
Note that recently spline-based activation functions have also been considered \citep{Durkan2019Cubic,Durkan2019Neural} and will be discussed in Section \ref{s:spline}.

\subsection{Linear Flows}

Elementwise operations alone are insufficient as they cannot express any form of correlation between dimensions. Linear mappings can express correlation between dimensions:
\begin{equation}
\flowfunc(\flowval) = \mathbf{A} \flowval + \mathbf{b}
\end{equation}

\noindent where $\mathbf{A}\in \R^{\flowdim \times \flowdim}$ and $\mathbf{b} \in \R^\flowdim$ are parameters. If $\mathbf{A}$ is an invertible matrix, the function is invertible. 

Linear flows are limited in their expressiveness. Consider a Gaussian base distribution: $\basedensity(\baseval) = \mathcal{N}(\baseval, \mathbf{\mu}, \Sigma)$. After transformation by a linear flow, the distribution remains Gaussian with distribution $\obsdensity = \mathcal{N}(\obsval, \mathbf{A}\mu + \mathbf{b}, \mathbf{A}^T\Sigma \mathbf{A} )$. More generally, a linear flow of a distribution from the exponential family remains in the exponential family.
However, linear flows are an important building block as they form the basis of affine coupling flows (Section \ref{s:affine}).

 Note that the determinant of the Jacobian is simply $\det(\mathbf{A})$, which can be computed in $\bigO (\flowdim^3)$, as can the inverse.  Hence, using linear flows can become expensive for large $\flowdim$.  
By restricting the form of $\mathbf{A}$ we can avoid these practical problems at the expense of expressive power.  In the following sections we discuss different ways of limiting the form of linear transforms to make them more practical.

\subsubsection{Diagonal}

If $\mathbf{A}$ is diagonal with nonzero diagonal entries, then its inverse can be computed in linear time and its determinant is the product of the diagonal entries.  However, the result is an elementwise transformation and hence cannot express correlation between dimensions.  Nonetheless, a diagonal linear flow can still be useful for representing normalization transformations \citep{Dinh2017} which have become a ubiquitous part of modern neural networks \citep{Ioffe2015}.

\subsubsection{Triangular}
\label{s:lintriang}
The triangular matrix is a more expressive form of linear transformation whose  determinant is the product of its diagonal. It is non-singular so long as its diagonal entries are non-zero. Inversion is relatively inexpensive requiring a single pass of back-substitution costing $\bigO(\flowdim^2)$ operations.

\cite{Tomczakconvex} combined $K$ triangular matrices $\mathbf{T}_i$, each with ones on the diagonal, and a $K$-dimensional probability vector $\omega$ to define a more general linear flow  $\obsval = (\sum_{i=1}^K \omega_i \mathbf{T}_i ) \baseval$. The determinant of this bijection is one. However finding the inverse has $\bigO(\flowdim^3)$ complexity, if some of the matrices are upper- and some are lower-triangular.

\subsubsection{Permutation and Orthogonal}
\label{s:orthogonal}

The expressiveness of triangular transformations is sensitive to the ordering of dimensions.  Reordering the dimensions can be done easily using a permutation matrix which has an absolute determinant of $1$.
Different strategies have been tried, including  reversing and a fixed random permutation \citep{Dinh2017,Kingma2018}.
However, the permutations cannot be directly optimized and so remain fixed after initialization which may not be optimal.

A more general alternative is the use of orthogonal transformations.
The inverse and absolute determinant of an orthogonal matrix are both trivial to compute which make them efficient.
\citet{Tomczak2016} used orthogonal matrices parameterized by the \textit{Householder transform}. The idea
is based on the fact from linear algebra  that any orthogonal matrix can be written as a product of reflections.  
To parameterize a reflection matrix $H$ in $\R^\flowdim$ one fixes a nonzero vector $\mathbf{v} \in \R^\flowdim$, and then defines $H = \mathbb{1} - \frac{2}{||\mathbf{v}||^2}\mathbf{v}\mathbf{v}^T$.

\subsubsection{Factorizations}
Instead of limiting the form of $\mathbf{A}$, \cite{Kingma2018} proposed using the $LU$ factorization:
\begin{equation}
\flowfunc(\flowval) = \mathbf{P}\mathbf{L}\mathbf{U} \flowval + \mathbf{b}
\end{equation}
where $\mathbf{L}$ is lower triangular with ones on the diagonal, $\mathbf{U}$ is upper triangular with non-zero diagonal entries, and $\mathbf{P}$ is a permutation matrix.
The determinant is the product of the diagonal entries of $\mathbf{U}$ which can be computed in $\bigO (\flowdim)$.
The inverse of the function $\flowfunc$ can be computed using two passes of backward substitution in $\bigO (\flowdim^2)$.
However, the discrete permutation $\mathbf{P}$ cannot be easily optimized.
To avoid this, $\mathbf{P}$ is randomly generated initially and then fixed. 
\cite{Hoogeboom2019} noted that fixing the permutation matrix limits the flexibility of the transformation, and proposed using the $QR$ decomposition instead where the orthogonal matrix $Q$ is described with Householder transforms.

\subsubsection{Convolution}

Another form of linear transformation is a convolution which has been a core component of modern deep learning architectures.
While convolutions are easy to compute their inverse and determinant are non-obvious.
Several approaches have been considered.
\cite{Kingma2018} restricted themselves to ``$1\times 1$'' convolutions for flows which are simply a full linear transformation but applied only across channels.
\cite{Zheng2018} used 1D convolutions (\textbf{ConvFlow}) and exploited the triangular structure of the resulting transform to efficiently compute the determinant.
However \cite{Hoogeboom2019} 
have provided a more general solution for modelling $d\times d$ convolutions, either by stacking together masked autoregressive convolutions (referred to as Emerging Convolutions) or by exploiting the Fourier domain representation of convolution to efficiently compute inverses and determinants (referred to as Periodic Convolutions).

\subsection{Planar and Radial Flows}\label{s:planar_radial}

\cite{Rezende2015} introduced planar and radial flows.  They are relatively simple, but their inverses aren't easily computed. These flows are not widely used in practice, yet  they are reviewed here for completeness.

\subsubsection{Planar Flows}
Planar flows expand and contract the distribution along certain specific directions and take the form
\begin{equation}
\flowfunc(\flowval) = \flowval + \mathbf{u} \actfunction(\mathbf{w}^T \flowval + b),
\end{equation}
where $\mathbf{u},\mathbf{w} \in \R^\flowdim$ and $b \in \R$ are parameters and $\actfunction: \R \to \R$ is a smooth non-linearity.
The Jacobian determinant for this transformation is 
\begin{eqnarray}
\det \left(\frac{\partial \flowfunc}{\partial \flowval}\right) & = & \det (\mathbb{1}_\flowdim  + \mathbf{u} \actfunction'(\mathbf{w}^T \flowval + b)\mathbf{w}^T) \nonumber \\ & = &1 + \actfunction'(\mathbf{w}^T \flowval + b)\mathbf{u}^T\mathbf{w},
\end{eqnarray}
where the last equality comes from the application of the matrix determinant lemma. This can be computed in $\bigO (\flowdim)$ time. The inversion of this flow isn't possible in closed form and may not exist for certain choices of $\actfunction(\cdot)$ and certain parameter settings \citep{Rezende2015}.

The term $\mathbf{u} \actfunction(\mathbf{w}^T \flowval + b) $ can be interpreted as a multilayer perceptron with a bottleneck hidden layer with a single unit \citep{Kingma2016}.
This bottleneck means that one needs to stack many planar flows to get high expressivity. 
\cite{Hasenclever2017} and \cite{sylvester} introduced \textbf{Sylvester flows} to resolve this problem:
\begin{equation}
    \flowfunc(\flowval) = \flowval + \mathbf{U} \couplingfunc(\mathbf{W}^T \flowval + \mathbf{b}),
\end{equation}
where $\mathbf{U}$ and $\mathbf{W}$ are $\flowdim \times M $ matrices, $\mathbf{b}\in \R^M$ and $\couplingfunc: \R^M \to \R^M$ is an elementwise smooth nonlinearity, where $M \leq \flowdim$ is a hyperparameter to choose and which can be interpreted as the  dimension of a hidden layer.
In this case the Jacobian determinant is: 
\begin{eqnarray}
\hspace{-0.5cm}\det \left(\frac{\partial \flowfunc}{\partial \flowval}\right) \!\!&\!\! =\!\! &\!\! \det (\mathbb{1}_\flowdim  + \mathbf{U} \, \diag(\couplingfunc'(\mathbf{W}^T \flowval + b))\mathbf{W}^T) \nonumber \\\!\!&\!\! =\!\! &\!\! \det (\mathbb{1}_M  + \diag(\couplingfunc'(\mathbf{W}^T \flowval + b))\mathbf{W}\mathbf{U}^T  ),
\end{eqnarray}
where the last equality is Sylvester's determinant identity (which gives these flows their name). This can be computationally efficient if $M$ is small. Some sufficient conditions for the invertibility of Sylvester transformations are discussed in \citet{Hasenclever2017} and \citet{sylvester}.

\subsubsection{Radial Flows}
Radial flows instead modify the distribution around a specific point so that
\begin{equation}
\flowfunc(\flowval) = \flowval + \frac{\beta}{\alpha + \Vert \flowval - \flowval_0 \Vert} (\flowval - \flowval_0)
\end{equation}
where $\flowval_0 \in \R^\flowdim$ is the point around which the distribution is distorted, and $\alpha,\beta \in \R$ are parameters, $\alpha >0$.
As for planar flows, the Jacobian determinant can be computed relatively efficiently. 
The inverse of radial flows cannot be given in closed form but does exist under suitable constraints on the parameters \citep{Rezende2015}.

\subsection{Coupling and Autoregressive Flows}

In this section we describe coupling and auto-regressive flows which are the two most widely used flow architectures. We first present them in the general form, and then in Section \ref{s:coupling_functions} we give specific examples.

\subsubsection{Coupling Flows}
\label{coupling}
\cite{Dinh2015} introduced a coupling method to  enable highly expressive transformations for flows (Figure~\ref{fig:coupling}a). 
Consider a disjoint partition of the input $\flowval \in \R^\flowdim$ into two subspaces: 
$(\flowval^A,\flowval^B) \in \R^d \times \R^{\flowdim-d} $ and a bijection $\couplingfunc(\placeholder \, ; \theta): \R^d \to \R^d$, parameterized by $\theta$.
Then one can define a function $\flowfunc: \R^\flowdim \to \R^\flowdim$ by the formula:
\begin{align}
\label{eq:coupling}
 \obsval^A &= \couplingfunc(\flowval^A; \Theta(\flowval^B)) \nonumber\\
    \obsval^B &= \flowval^B ,
\end{align}
where the parameters $\theta$ are defined by {\em any} arbitrary function $\Theta(\flowval^B)$  which only uses $\flowval^B$ as input.
This function is called a \textit{conditioner}. 
The bijection $\couplingfunc$ is called a \textit{coupling function}, and the resulting function $\flowfunc$ is called a \textit{coupling flow}. A coupling flow is invertible if and only if $\couplingfunc$ is invertible and has inverse:
\begin{align}
 \flowval^A  &= \couplingfunc^{-1}(\obsval^A; \Theta(\flowval^B)) \nonumber\\
    \flowval^B &=  \obsval^B .
\end{align}
The Jacobian of $\flowfunc$ is a block triangular matrix where the diagonal  blocks are $\jac{\couplingfunc}$ and the identity matrix respectively.
Hence the determinant of the Jacobian of the coupling flow is simply the determinant of $\jac{\couplingfunc}$.

Most coupling functions are applied to  $\flowval^A$ element-wise:
\begin{equation}
\label{coupling_func}
\couplingfunc(\flowval^A ; \theta) = (\couplingfunci_1(x^A_1 ; \theta_1) , \dots, \couplingfunci_d(x^A_d ; \theta_d) ),
\end{equation}
where each $\couplingfunci_i(\placeholder ; \theta_i): \R \to \R $ is a scalar bijection.
In this case a coupling flow is a triangular transformation (\ie, has a triangular Jacobian matrix).
See Section \ref{s:coupling_functions} for examples.

The power of a coupling flow resides in the ability of a conditioner $\Theta(\flowval^B)$ to be arbitrarily complex. In practice it is usually modelled as a neural network. For example, \citet{Kingma2018} used a shallow ResNet architecture.

Sometimes, however, the conditioner can be constant (\ie, not depend on $\flowval^B$ at all).
This allows for the construction of a ``\textit{multi-scale flow}'' \citet{Dinh2017} which gradually introduces dimensions to the distribution in the generative direction (Figure \ref{fig:coupling}b).
In the normalizing direction, the dimension reduces by half after each iteration step, such that most of semantic information is retained.
This reduces the computational costs of transforming high dimensional distributions and can capture the multi-scale structure inherent in certain kinds of data like natural images.

\begin{figure}[t!]
\centering
\includegraphics[scale=0.44]{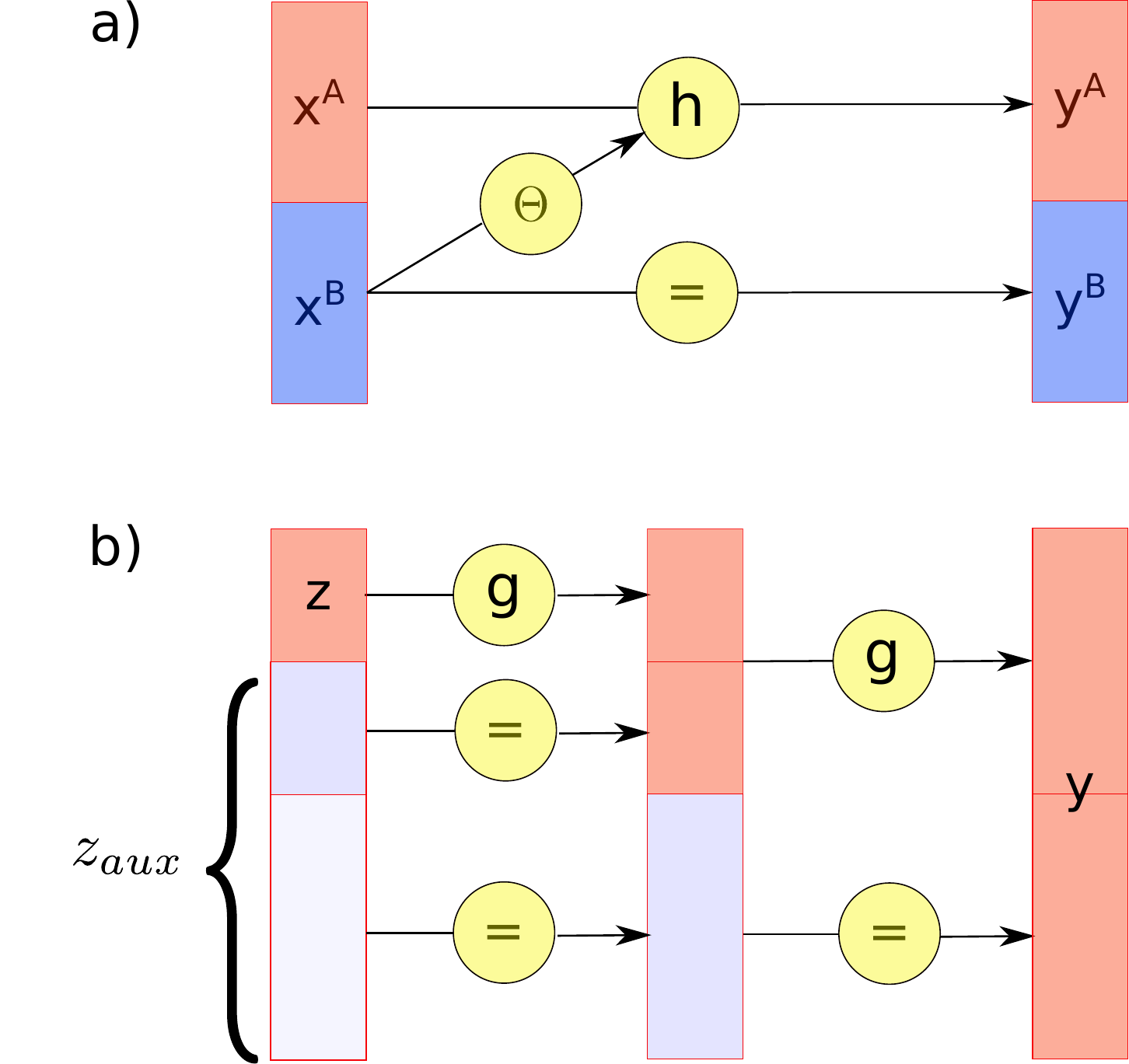}
\caption{Coupling architecture. a) A single coupling flow described in Equation \eqref{eq:coupling}.
A coupling function $\couplingfunc$ is applied to one part of the space, while its parameters depend on the other part.
b) Two subsequent multi-scale flows in the generative direction. A flow is applied to a relatively low dimensional vector $\baseval$; its parameters no longer depend on the rest part $\baseval_{aux}$. Then new dimensions are gradually introduced to the distribution.  }
\label{fig:coupling}
\end{figure}

The question remains of how to partition $\flowval$.
This is often done by splitting the dimensions in half \citep{Dinh2015}, potentially after a random permutation.
However, more structured partitioning has also been explored and is common practice, particularly when modelling images.
For instance, \cite{Dinh2017} used ``masked'' flows that take alternating pixels or blocks of channels in the case of an image in non-volume preserving flows (\textbf{RealNVP}).
In place of permutation \cite{Kingma2018} used $1\times 1$ convolution (\textbf{Glow}).
For the partition for the multi-scale flow in the normalizing direction, \cite{Das2019} suggested selecting features at which the Jacobian of the flow has higher values for the propagated part.

\subsubsection{Autoregressive Flows}
\label{s:autoregressive}
\cite{Kingma2016} used  autoregressive models  as a form of normalizing flow. These are non-linear generalizations of multiplication by a triangular matrix (Section \ref{s:lintriang}).

Let $\couplingfunci(\placeholder \, ; \theta): \R \to \R$ be a bijection parameterized by $\theta$. Then an autoregressive model is a function $\flowfunc: \R^\flowdim \to \R^\flowdim$, which outputs each entry of $\obsval = \flowfunc(\flowval)$ conditioned on the previous entries of the input:
\begin{equation}
\label{e:AF_direct}
\obsvali_t = \couplingfunci(\flowvali_t ; \Theta_t(\flowval_{1:t-1})),
\end{equation}
where $\flowval_{1:t} = (\flowvali_1, \dots, \flowvali_t)$. For $t = 2, \dots, \flowdim$ we choose arbitrary functions $\Theta_t(\placeholder)$ mapping  $\R^{t-1}$ to the set of all parameters, and $\Theta_1$ is a constant. The functions $\Theta_t(\placeholder)$ 
are called \emph{conditioners}.   

The Jacobian matrix of the autoregressive transformation $\flowfunc$ is triangular. Each output $\obsvali_{t}$ only depends on $\flowval_{1:t}$, and so the determinant is just a product of its diagonal entries: 
\begin{equation}
\det \left( \flowfuncJ \right) =
\prod_{t=1}^\flowdim \frac{\partial \obsvali_t}{\partial \flowvali_t} .
\end{equation}
In practice, it's possible to efficiently compute all the entries of the direct flow (Equation \eqref{e:AF_direct}) in one pass using a single network with appropriate masks 
\citep{Germain2015}.  This idea was used by \cite{Papamakarios2017} to create masked autoregressive flows (\textbf{MAF}).

However, the computation of the inverse is more  challenging.  Given the inverse of  $\couplingfunci $, the inverse of $\flowfunc$ can be found with recursion: we have $\flowvali_1 = \couplingfunci^{-1}(\obsvali_1 ; \Theta_1)$ and for any $t = 2, \dots, \flowdim$, $\flowvali_t = \couplingfunci^{-1}(\obsvali_t ; \Theta_t(\flowval_{1:t-1}))$. This computation is inherently sequential which makes it difficult to implement efficiently on modern hardware as it cannot be parallelized.

Note that the functional form for the autoregressive model is very similar to that for the coupling flow. In both cases a bijection $\couplingfunci$ is used, which has as an input one part of the space and  which is parameterized conditioned on the other  part.
We call this bijection a \textit{coupling function} in both cases.
Note that \cite{Huang2018NeuralFlows} used the name ``transformer'' (which has nothing to do with  transformers in NLP).

Alternatively, \citet{Kingma2016} introduced the ``inverse autoregressive flow'' (\textbf{IAF}),  which outputs each entry of  $\obsval$ conditioned the previous entries of  $\obsval$ (with respect to the fixed ordering). Formally,
\begin{equation}
\label{eq:AF_inv}
\obsvali_t = \couplingfunci(\flowvali_t ; \theta_t(\obsval_{1:t-1})).
\end{equation}
One can see that the functional form of the inverse autoregressive flow is the same as the form of the inverse of the flow in Equation \eqref{e:AF_direct}, hence the name.
Computation of the IAF is sequential and expensive, but the inverse of IAF (which is a direct autoregressive flow) can be computed relatively efficiently (Figure \ref{fig:af}). 

\begin{figure}[h]
\centering
\includegraphics[scale=0.5]{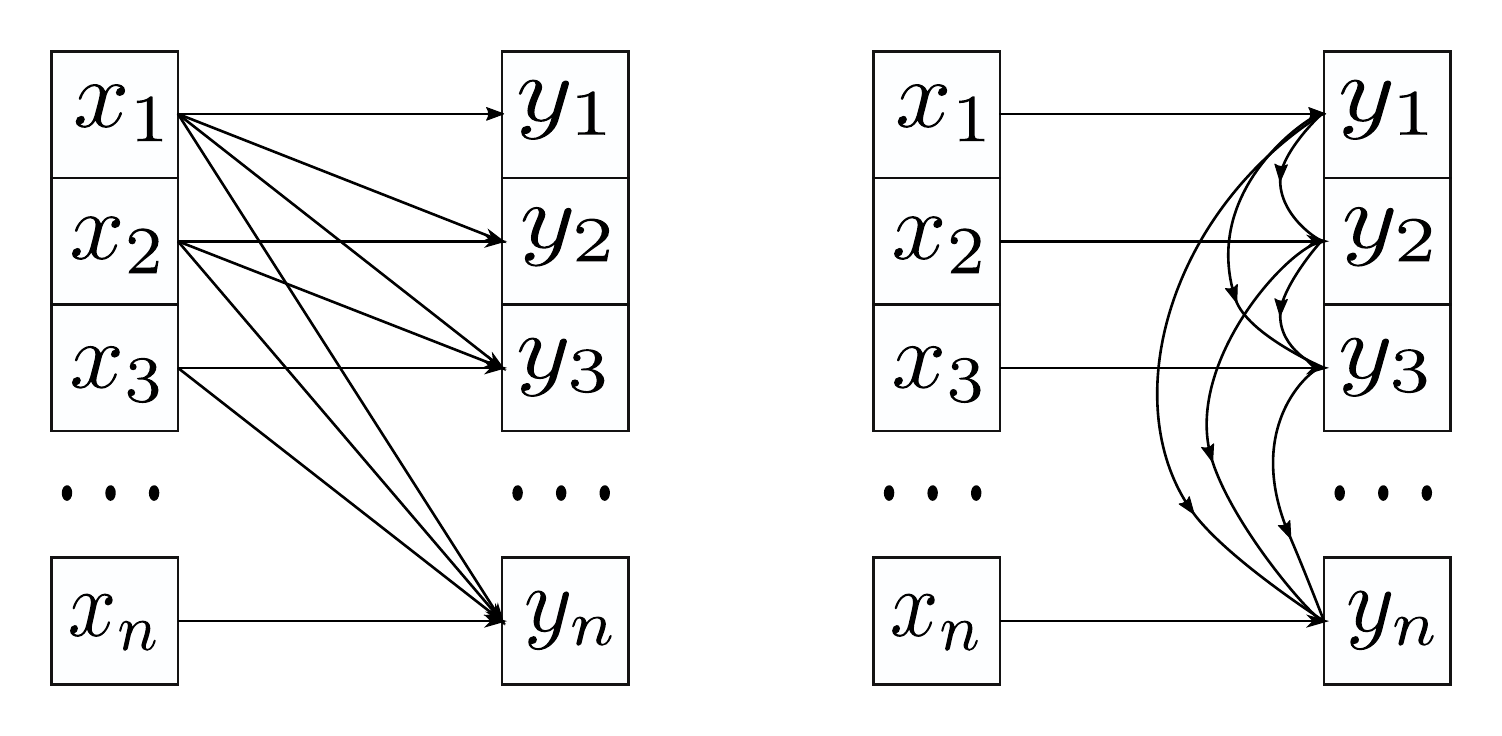}
\caption{Autoregressive flows. On the left, is the direct autoregressive flow given  in Equation \eqref{e:AF_direct}. Each output depends on the current and previous inputs and so this operation can be easily parallelized.  On the right, is the inverse autoregressive flow from Equation \eqref{eq:AF_inv}. Each output depends on the current input and the previous outputs and so computation is inherently sequential and cannot be parallelized.}
\label{fig:af}
\end{figure}

In Section \ref{s:Density_Estimation} we noted that papers typically model flows in the ``normalizing flow'' direction (\ie, in terms of $\flowinv$ from data to the base density) to enable efficient evaluation of the log-likelihood during training.
In this context one can think of IAF as a flow in the generative direction: \ie in terms of $\flowfunc$ from base density to data.
Hence \citet{Papamakarios2017} noted that one should use IAFs if fast sampling is needed (\eg, for stochastic variational inference), and MAFs if fast density estimation is desirable.
The two methods are closely related and, under certain circumstances, are theoretically equivalent  \citep{Papamakarios2017}.  

\subsubsection{Universality}
\label{s:universal}
For several autoregressive flows the universality property has been proven  \citep{Huang2018NeuralFlows, Priyank2019}. Informally, universality means that the flow can learn any target density to any required precision given sufficient capacity and data. 
We will provide a formal proof of the universality theorem following  \cite{Priyank2019}. This section requires some knowledge of measure theory and functional analysis and can be safely skipped.

First,  recall that a mapping $T = (T_1, \dots, T_\flowdim): \R^\flowdim \to \R^\flowdim $ is called triangular if  $T_i$ is a function of $\flowval_{1:i}$ for each $i = 1, \dots, \flowdim$. Such a triangular map $T$ is called increasing if  $T_i$ is an increasing function of $\flowvali_i$ for each $i$. 

\begin{proposition}[\citep{triangular2005}, Lemma 2.1]
\label{p:bogachev}
If $\mu$ and $\nu$ are absolutely continuous Borel probability measures on $\R^\flowdim$, then there exists an increasing triangular transformation $T: \R^\flowdim \to \R^\flowdim$, such that $\nu = T_*\mu$. This transformation is unique up to null sets of $\mu$. A similar result holds for measures on $[0,1]^\flowdim$.
\end{proposition}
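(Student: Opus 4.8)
The plan is to construct $T$ by induction on the dimension $\flowdim$, following the Knothe--Rosenblatt scheme: the one-dimensional case is handled directly through cumulative distribution functions, and the inductive step glues together a family of lower-dimensional transport maps obtained by disintegrating $\mu$ and $\nu$ along their first coordinate. Both increasing triangularity and the pushforward property will be checked to be preserved under this gluing.

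For the base case $\flowdim = 1$, let $F_\mu$ and $F_\nu$ be the cumulative distribution functions of $\mu$ and $\nu$. Since both measures are absolutely continuous, $F_\mu$ and $F_\nu$ are continuous and strictly increasing on the interiors of their supports, so $F_\nu^{-1}$ is well defined there. I would set $T = F_\nu^{-1}\circ F_\mu$, which is increasing, and verify $T_*\mu = \nu$ directly from Definition \ref{d:pushforward} by checking that $T$ maps the mass to the left of $\flowvali$ correctly, i.e. $F_\nu(T(\flowvali)) = F_\mu(\flowvali)$. For uniqueness, any increasing map $\tilde T$ with $\tilde T_*\mu=\nu$ must satisfy $\mu((-\infty,\flowvali]) = \nu((-\infty,\tilde T(\flowvali)])$, that is $F_\nu(\tilde T(\flowvali)) = F_\mu(\flowvali)$ for $\mu$-a.e. $\flowvali$, which pins down $\tilde T$ up to a $\mu$-null set.

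For the inductive step, assume the claim in dimension $\flowdim-1$. Let $\mu_1,\nu_1$ be the marginals of $\mu,\nu$ on the first coordinate; these are absolutely continuous on $\R$, so the base case yields a unique increasing $T_1$ with $(T_1)_*\mu_1=\nu_1$. I would then disintegrate $\mu$ and $\nu$ over the first coordinate, obtaining conditional measures $\mu^{\flowvali_1}$ of $\flowval_{2:\flowdim}$ given $\flowvali_1$ and $\nu^{\obsvali_1}$ of $\obsval_{2:\flowdim}$ given $\obsvali_1$, which are absolutely continuous on $\R^{\flowdim-1}$ for a.e. conditioning value. Applying the inductive hypothesis to the pair $(\mu^{\flowvali_1},\, \nu^{T_1(\flowvali_1)})$ gives an increasing triangular map $S^{\flowvali_1}:\R^{\flowdim-1}\to\R^{\flowdim-1}$, and I would define $T(\flowval) = (T_1(\flowvali_1),\, S^{\flowvali_1}(\flowval_{2:\flowdim}))$. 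By construction each component depends only on $\flowval_{1:i}$ and is increasing in $\flowvali_i$, so $T$ is increasing triangular, and $T_*\mu=\nu$ follows by reassembling the disintegration.

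The main obstacle is the measure-theoretic bookkeeping in the inductive step. I must ensure that the conditional measures are genuinely absolutely continuous for almost every conditioning value (this follows from absolute continuity of the joint densities together with Fubini), and --- more delicately --- that the family $\flowvali_1 \mapsto S^{\flowvali_1}$ can be chosen jointly measurable in $(\flowvali_1,\flowval_{2:\flowdim})$ so that $T$ is itself a measurable map. Invoking the disintegration theorem on $\R^\flowdim$ and a measurable-selection argument for the a.e.-unique maps $S^{\flowvali_1}$ resolves this. Uniqueness up to $\mu$-null sets then propagates through the induction, since $T_1$ is unique $\mu_1$-a.e. and each $S^{\flowvali_1}$ is unique $\mu^{\flowvali_1}$-a.e., and a family of null sets in the conditionals integrates to a single $\mu$-null set. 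Finally, the $[0,1]^\flowdim$ case is identical, with cumulative distribution functions defined on the unit interval.
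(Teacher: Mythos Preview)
Your construction is correct and is precisely the standard Knothe--Rosenblatt rearrangement. However, the paper itself does not prove this proposition at all: it is stated as a citation to \cite{triangular2005}, Lemma 2.1, with no accompanying argument. So there is no ``paper's own proof'' to compare against here; the proposition is quoted as an external result and used as a black box in the universality discussion that follows.

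That said, your sketch is essentially the argument that appears in the cited reference of Bogachev, Kolesnikov, and Medvedev, so in that sense you have reconstructed the intended proof. The only remark worth making is that your identification of the measurable-selection issue is on point: in the original, measurability of $(\flowvali_1,\flowval_{2:\flowdim}) \mapsto S^{\flowvali_1}(\flowval_{2:\flowdim})$ is handled by noting that the one-dimensional CDF maps depend measurably on the conditioning parameter (because the conditional densities do), and then iterating. You do not actually need an abstract measurable-selection theorem, since at each step the map is given by an explicit formula in terms of conditional CDFs; joint measurability follows from Fubini applied to the joint density. This slightly simplifies the bookkeeping you flagged as the main obstacle.
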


\begin{proposition}
If $\mu$ is an absolutely continuous Borel probability measures on $\R^\flowdim$ and $\{T_n\}$ is a sequence of maps $\R^\flowdim \to \R^\flowdim $ which converges pointwise to a map $T$, then a sequence of measures $(T_n)_* \mu$ weakly converges to $T_*\mu$.
\end{proposition}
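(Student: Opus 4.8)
The plan is to verify the definition of weak convergence directly: for every bounded continuous test function $f : \R^\flowdim \to \R$, I will show that $\int f \, d((T_n)_*\mu) \to \int f \, d(T_*\mu)$. The first step is to rewrite both integrals over the single fixed measure $\mu$ using the defining change-of-variables identity for pushforwards (Definition \ref{d:pushforward}), which gives $\int f \, d((T_n)_*\mu) = \int f(T_n(\flowval))\, d\mu(\flowval)$ and likewise $\int f \, d(T_*\mu) = \int f(T(\flowval))\, d\mu(\flowval)$. This reduces the problem from comparing two different measures to comparing two integrands against one fixed measure, at which point the hypotheses can be applied termwise.

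Next I would combine the two assumptions. Since $T_n \to T$ pointwise and $f$ is continuous, the composite integrands converge pointwise: $f(T_n(\flowval)) \to f(T(\flowval))$ for every $\flowval$. Because $f$ is bounded, say $|f| \le M$, the sequence $|f \circ T_n|$ is uniformly dominated by the constant function $M$, which is $\mu$-integrable precisely because $\mu$ is a probability (hence finite) measure. The Dominated Convergence Theorem then yields $\int f(T_n)\, d\mu \to \int f(T)\, d\mu$, which is exactly the desired weak convergence.

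Two routine well-definedness points should be dispatched. First, each $T_n$ must be measurable for the pushforward $(T_n)_*\mu$ to make sense; I take this as implicit in the hypothesis that the $T_n$ are genuine maps of the measurable spaces, and then the limit $T$ is automatically measurable as a pointwise limit of measurable maps, so $T_*\mu$ and the integrand $f\circ T$ are legitimate objects. Second, the change-of-variables identity used in the opening step is itself just the defining property of the pushforward, verified on indicator functions and extended to bounded continuous $f$ by the standard approximation argument.

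I do not expect a serious obstacle: the entire content of the statement is the interchange of a pointwise limit with an integral, and the only thing requiring care is producing the dominating function, which is immediate from boundedness of $f$ together with finiteness of $\mu$. It is worth noting in the write-up that absolute continuity of $\mu$ is never actually invoked — only that $\mu$ is a finite measure — so the conclusion in fact holds verbatim for an arbitrary probability measure.
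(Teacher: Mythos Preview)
Your proposal is correct and follows exactly the approach the paper indicates: the paper's own proof is a one-line reference stating that the result follows from the dominated convergence theorem, and you have spelled out precisely that argument. Your additional observation that absolute continuity of $\mu$ is unnecessary (finiteness suffices) is accurate and worth keeping.
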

\begin{proof}
See \cite{Huang2018NeuralFlows}, Lemma 4. The result follows from the dominated convergence theorem. 
\end{proof}

As a corollary, to claim that a class of autoregressive flows $\flowfunc(\placeholder, \theta): \R^\flowdim \to \R^\flowdim $ is universal, it is enough to demonstrate that a family of coupling functions $\couplingfunci$ used in the class is dense in the set of all monotone functions in the pointwise  convergence topology. In particular, \cite{Huang2018NeuralFlows} used neural monotone networks for coupling functions, and \cite{Priyank2019} used monotone polynomials. Using the theory outlined in this section,  universality could also be proved for spline flows \citep{Durkan2019Cubic, Durkan2019Neural} with splines for coupling functions (see Section \ref{s:spline}).

\subsubsection{Coupling Functions}
\label{s:coupling_functions}
As described in the previous sections,  coupling flows and autoregressive flows have a similar functional form and both have coupling functions as  building blocks.
A coupling function is a bijective differentiable function  $ \couplingfunc(\placeholder, \theta): \R^d \to \R^d$, parameterized by $\theta$. 
In coupling flows, these functions are typically constructed by applying a scalar coupling function $\couplingfunci (\placeholder, \theta): \R \to \R$ elementwise.
In autoregressive flows, $d=1$ and hence they are also scalar valued.
Note that scalar coupling functions are necessarily (strictly) monotone. 
In this section we describe the scalar coupling functions commonly used in the literature.

\paragraph{Affine coupling}
  
\label{s:affine}
Two simple forms of coupling functions $ \couplingfunci: \R \to \R$ were proposed by \citep{Dinh2015} in \textbf{NICE} (nonlinear independent component estimation). These were the \emph{additive coupling function}: 
\begin{equation}
\couplingfunci(x \, ; \theta) = x + \theta,
\quad \theta \in \R,
\end{equation}
and the \emph{affine coupling function}: 
\begin{equation}
\couplingfunci(x ; \theta) = \theta_1 x + \theta_2, \quad \theta_1 \neq 0, \ \theta_2 \in \R.
\end{equation}
Affine coupling functions are used for coupling flows in NICE \citep{Dinh2015}, RealNVP \citep{Dinh2017}, Glow  \citep{Kingma2018} and for autoregressive architectures in IAF \citep{Kingma2016} and MAF \citep{Papamakarios2017}.
They are simple and computation is efficient.
However, they are limited in expressiveness and many flows must be stacked to represent complicated distributions. \\

\paragraph{Nonlinear squared flow}  \cite{Ziegler2019} proposed an invertible non-linear squared transformation defined by:
\begin{equation}
    \couplingfunci(x \, ; \theta) = a x + b + \frac{c}{1 + (d x + h)^2} .
\end{equation} 
Under some constraints on parameters $\theta = [a,b,c,d,h] \in \R^5$, the coupling function is invertible and its inverse is analytically computable as a root of a cubic polynomial (with only one real root).
Experiments showed that these coupling functions facilitate learning multimodal distributions. \\

\paragraph{Continuous mixture  CDFs}
\label{s:CDF}
\cite{ho2019flow} proposed the \textbf{Flow++} model, which contained several improvements, including a more expressive coupling function.
The layer is almost like a linear transformation, but one also applies a monotone function to $x$:
\begin{equation}
 \couplingfunci(x ; \theta) = \theta_1  F(x, \theta_3) + \theta_2 ,
\end{equation}
where $\theta_1 \neq 0$, $\theta_2 \in \R$ and $\theta_3 = [\bm{\pi}, \bm{\mu}, \mathbf{s}] \in \R^K\times \R^K \times \R^K_+ $.
The function $F(x, \bm{\pi}, \bm{\mu}, \mathbf{s}) $ is the CDF of a mixture of $K$ logistics, postcomposed with an inverse sigmoid:
\begin{equation}
 F(x, \bm{\pi}, \bm{\mu}, \mathbf{s}) = \sigma^{-1}\left(\sum_{j=1}^K \pi_j \sigma\left(\frac{x-\mu_j}{s_j}\right) \right) .
\end{equation}
Note, that the post-composition with $\sigma^{-1}: [0,1] \to \R$ is used to ensure the right range for $\couplingfunci$.
Computation of the inverse is done numerically with the bisection algorithm.
The derivative of the transformation with respect to $x$ is expressed in terms of PDF of logistic mixture (\ie, a linear combination of hyperbolic secant functions), and its computation is not expensive.
An ablation study demonstrated that switching from an affine coupling function to a logistic mixture improved performance slightly. \\

\paragraph{Splines}
\label{s:spline}
A spline is a piecewise-polynomial or a piecewise-rational function which  is specified by $K + 1$ points $(\flowvali_i,\obsvali_i)_{i=0}^K$, called \textit{knots}, through which the spline passes.
To make a useful coupling function, the spline should be monotone which will be the case if 
 $\flowvali_i < \flowvali_{i+1}$ and  $\obsvali_i < \obsvali_{i+1}$.  Usually splines are considered on a compact interval.

\textbf{Piecewise-linear and piecewise-quadratic:}

\cite{Muller2018NeuralSampling} used linear splines for coupling functions $\couplingfunci: [0,1] \to [0,1]$. They 
divided the domain into $K$ equal bins. Instead of defining increasing values for $\obsvali_i$, they modeled $\couplingfunci$ as the integral of a positive piecewise-constant function:
\begin{equation}
\couplingfunci(x ; \theta) = \alpha \theta_b + \sum_{k=1}^{b-1}\theta_k, 
\end{equation}
where $\theta \in \R^K$ is a probability vector, 
$b = \lfloor K x \rfloor$ (the bin that contains $x$), and $\alpha = Kx - b$  (the position of $x$ in bin $b$). This map is invertible, if all $\theta_k > 0$, with  derivative:   
$
\frac{\partial \couplingfunci}{\partial x} = \theta_bK.$

\cite{Muller2018NeuralSampling} also used a monotone quadratic spline on the unit interval for a coupling function and modeled this as the integral of a positive piecewise-linear function.  A monotone quadratic spline is invertible; finding its inverse map requires solving a quadratic equation. 

\textbf{Cubic Splines:}
\cite{Durkan2019Cubic} proposed using monotone cubic splines for a coupling function.
They do not restrict the domain to the unit interval, but instead use the form:
$\couplingfunci(\placeholder ; \theta) = \sigma^{-1} ( \hat{\couplingfunci}(\sigma( \placeholder ) ; \theta) )$, where $\hat{\couplingfunci}(\placeholder ; \theta): [0,1] \to [0,1]$ is a monotone cubic spline and $\sigma$ is a sigmoid.  Steffen's method is used to construct the spline. Here, one specifies $K + 1$ knots of the spline and boundary derivatives $\hat{\couplingfunci}'(0) $ and $\hat{\couplingfunci}'(1)$. These quantities are modelled as the output of a neural network.

Computation of the derivative is easy as it is piecewise-quadratic. A monotone cubic polynomial has only one real root and for inversion, one can find this either analytically or numerically. 
However, the procedure is numerically unstable if not treated carefully.
The flow can be trained by gradient descent by differentiating through the numerical root finding method.
However, \cite{Durkan2019Neural}, noted numerical difficulties when the sigmoid saturates for values far from zero.

\textbf{Rational quadratic splines:}
\cite{Durkan2019Neural}  model a coupling function $\couplingfunci(x \, ; \theta)$ as a monotone rational-quadratic spline on an interval as the identity function otherwise.
They define the spline using the method of \citet{Gregory1982}, by specifying  $K + 1$ knots $\{ \couplingfunci(\flowvali_i) \}_{i=0}^{K}$ and the  derivatives at the inner points: $\{ \couplingfunci'(\flowvali_i) \}_{i=1}^{K-1}$. These locations of the knots and their derivatives are modelled as the output of a neural network.

 The derivative with respect to $x$ is a quotient derivative and the function can be inverted by solving a quadratic equation.  \cite{Durkan2019Neural} used this coupling function with both a coupling architecture \textbf{RQ-NSF(C)} and an auto-regressive architecture \textbf{RQ-NSF(AR)}. \\

\paragraph{Neural autoregressive flow}
\label{NAF}
\cite{Huang2018NeuralFlows} introduced Neural Autoregressive Flows (\textbf{NAF}) where  a coupling function $\couplingfunci(\placeholder \, ; \theta) $ is modelled with a deep neural network. Typically such a network is not invertible, but they proved a  sufficient condition for it to be bijective:
\begin{proposition}
\label{p:monotone_nn}
If $\NN(\placeholder): \R \to \R$ is a multilayer percepton, such that all weights are positive and all activation functions are strictly monotone, then $\NN(\placeholder)$ is a strictly monotone function.
\end{proposition}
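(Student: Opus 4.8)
The plan is to exploit the layered structure of a multilayer perceptron and argue that each elementary operation strictly preserves the coordinatewise order, so that the whole composition does too. Writing the network as $\NN = A_L \circ \phi \circ A_{L-1} \circ \cdots \circ \phi \circ A_1$, where each $A_\ell(\mathbf{v}) = W_\ell \mathbf{v} + \mathbf{b}_\ell$ is the affine map of layer $\ell$ (with all entries of $W_\ell$ strictly positive by hypothesis) and $\phi$ denotes the elementwise application of the strictly increasing activation, I would introduce on each intermediate space $\R^{n}$ the strict coordinatewise order defined by $\mathbf{u} \prec \mathbf{v}$ iff $u_i < v_i$ for every coordinate $i$. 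The goal then becomes showing that $x < x'$ implies $\NN(x) < \NN(x')$, and I will obtain this by propagating the relation $\prec$ through the network one layer at a time.

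The two building-block lemmas are straightforward. First, a positive affine layer is strictly order-preserving: if $\mathbf{u} \prec \mathbf{v}$ then for each output coordinate $j$ we have $\sum_i (W_\ell)_{ji} u_i < \sum_i (W_\ell)_{ji} v_i$, because every coefficient $(W_\ell)_{ji}$ is positive and every term strictly increases, hence $A_\ell(\mathbf{u}) \prec A_\ell(\mathbf{v})$. The same computation handles the first layer, whose input is the scalar $x$: since the first weight vector has positive entries, $x < x'$ already gives $A_1(x) \prec A_1(x')$, so every coordinate of the first hidden preactivation is strict. Second, the elementwise activation preserves $\prec$: if $\mathbf{u} \prec \mathbf{v}$ then, applying the strictly increasing scalar function coordinatewise, $\phi(\mathbf{u}) \prec \phi(\mathbf{v})$.

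With these in hand I would finish by induction on the layers, maintaining the invariant that after each operation the two propagated vectors remain in the relation $\prec$ (that is, differ strictly in every coordinate). Starting from $x < x'$, the first layer produces strictly ordered vectors, each subsequent activation and each subsequent positive affine map preserves this, and at the final scalar output the relation $\prec$ reduces to $\NN(x) < \NN(x')$. This establishes strict monotonicity.

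The only point requiring genuine care, and the step I expect to be the crux, is the affine layer, where the positivity of the weights is used in an essential way: it is exactly positivity that turns a coordinatewise-strict inequality in the input into a coordinatewise-strict inequality in each output coordinate. It is also worth flagging that \emph{strictly monotone} must be read here as strictly increasing; a decreasing activation would reverse the order and, when mixed across layers, could destroy monotonicity, so the intended reading is that each activation is increasing.
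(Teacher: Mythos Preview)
Your argument is correct. The paper itself does not supply a proof of this proposition; it merely states it and attributes it to Huang et~al.\ (2018), so there is no in-paper argument to compare against. Your layer-by-layer propagation of the strict coordinatewise order is the standard direct proof. Your closing caveat is also well taken: as literally phrased the statement fails if ``strictly monotone'' is allowed to include strictly decreasing activations (two hidden units with activations $t\mapsto t$ and $t\mapsto -t$ followed by an all-ones output layer already produce a constant function), so the intended reading must indeed be strictly increasing.
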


They proposed two forms of neural networks: the deep sigmoidal coupling function (\textbf{NAF-DSF}) and deep dense sigmoidal coupling function (\textbf{NAF-DDSF}).
Both are  MLPs with layers of sigmoid and logit units and non-negative weights; the former has a single hidden layer of sigmoid units, whereas the latter is more general and does not have this bottleneck. By Proposition~\ref{p:monotone_nn}, the resulting $\couplingfunci(\placeholder \, ; \theta)$ is a strictly monotone function. They also proved that a DSF network can approximate any strictly monotone univariate function and so NAF-DSF is a universal flow.

\cite{Wehenkel2019} noted that imposing positivity of weights on a flow makes training harder and requires more complex conditioners.
To mitigate this,  they introduced unconstrained monotonic neural networks (\textbf{UMNN}).
The idea is in order to model a strictly monotone function, one can describe a strictly positive (or negative) function with a neural network and then integrate it numerically. They demonstrated that UMNN requires less parameters than NAF to reach similar performance, and so is more scalable for high-dimensional datasets. \\

\paragraph{Sum-of-Squares polynomial flow}
\label{s:sos}
\cite{Priyank2019} modeled $\couplingfunci(\placeholder \, ; \theta) $ as a strictly increasing polynomial. They proved such polynomials can approximate any strictly monotonic univariate continuous function. Hence, the resulting flow (\textbf{SOS} - sum of squares polynomial flow) is a universal flow.

The authors observed that the derivative of an increasing single-variable polynomial is a positive polynomial. Then they used a classical result from algebra:  all positive  single-variable polynomials are the sum of squares of polynomials.
To get the coupling function, one needs to integrate the sum of squares:
\begin{equation}
\couplingfunci(x \, ; \theta)  = c + \int_0^x \sum_{k=1}^K\left(\sum_{l=0}^L a_{kl} u^l \right)^2 du \ ,
\end{equation}
where $L$ and $K$ are hyperparameters (and, as noted in the paper, can be chosen to be $2$).

SOS is easier to train than NAF, because there are no restrictions on the parameters (like positivity of weights). For L=0,
 SOS reduces to the affine coupling function and so it is a generalization of the basic affine  flow. \\

\paragraph{Piecewise-bijective coupling}
\label{piecewise-bijective}

\cite{dinh2019rad} explore the idea that a coupling function  does not need to be bijective, but just piecewise-bijective (Figure \ref{f:rad}).
Formally, they consider a function $\couplingfunci(\placeholder \, ; \theta): \R \to \R $ and a covering of the domain into $K$ disjoint subsets: $\R = \bigsqcup_{i=1}^K A_i$, such that the restriction of the function onto each subset $\couplingfunci(\placeholder \, ; \theta)|_{A_i} $ is injective.  

\begin{figure}[h]
\centering
\includegraphics[scale=0.46]{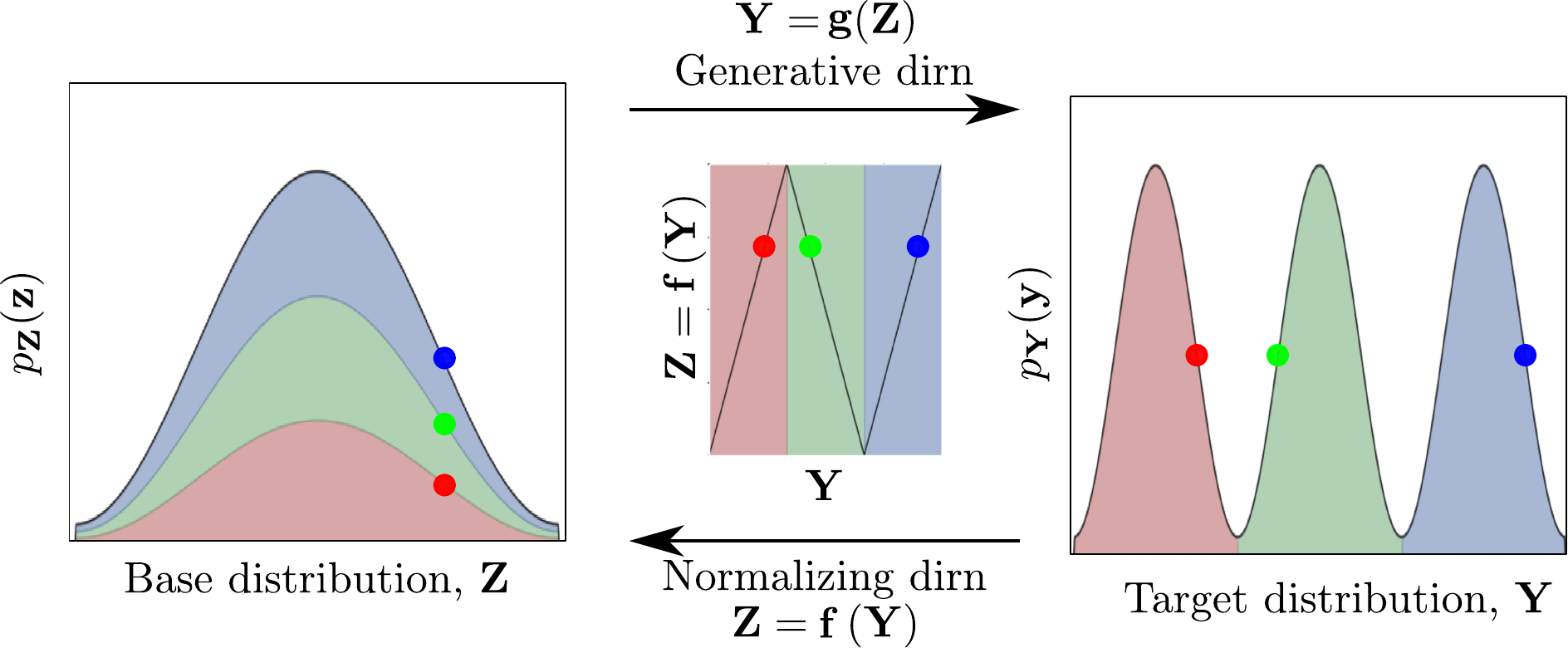}
\caption{Piecewise bijective coupling.  The target domain (right) is divided into disjoint sections (colors) and each mapped by a monotone function (center) to the base distribution (left).  For inverting the function, one samples a component of the base distribution using a gating network.}
\label{f:rad}
\end{figure}

\cite{dinh2019rad} constructed a flow  $\flowinv: \R^\flowdim \to \R^\flowdim$ with a coupling architecture and piecewise-bijective coupling function in the normalizing direction - from data distribution to (simpler) base distribution.  There is a covering of the data domain, and each subset of this covering is separately mapped to the base distribution.  Each part of the base distribution now receives contributions from each subset of the data domain.  For sampling, \cite{dinh2019rad} proposed a probabilistic mapping from the base to data domain.

More formally, denote the target $\obsvali$ and base $\basevali$, and consider a  lookup function $\phi: \R \to [K] = \{1, \dots, K\}$, such that $\phi(y) = k$, if $y \in A_k$. One can define a new map  $\R \to \R \times [K] $, given by the rule $y \mapsto (\couplingfunci(y), \phi(y))$, and a density on a target space $p_{\basevari,[K]}(\basevali, k) =    p_{[K]|\basevari}(k|\basevali) p_\basevari(\basevali)$.
One can think of this as an unfolding of the non-injective map $\couplingfunci$.
In particular, for each point $\basevali$ one can find its pre-image by sampling from $p_{[K]|\basevari}$, which is called a \emph{gating network}.
Pushing forward along this unfolded map is now well-defined and one gets the formula for the density $p_{\obsvari}$: 
\begin{equation}
     p_{\obsvari}(\obsvali) = p_{\basevari,[K]}(\couplingfunci(\obsvali),\phi(\obsvali)) |\jac{\couplingfunci} (\obsvali)| .
\end{equation}
This  real and discrete (\textbf{RAD}) flow efficiently learns distributions with discrete structures (multimodal distributions,  distributions with holes, discrete symmetries etc).

\subsection{Residual Flows}
\label{s:residual}
Residual networks \citep{He2016} are compositions of the function of the form 
\begin{equation}
\label{resnet}
 \flowfunc(\flowval) = \flowval + F(\flowval).  
\end{equation}
Such a function is called \emph{a residual connection}, and here the \emph{residual block} $F(\placeholder)$ is a feed-forward neural network of any kind (a CNN in the original paper).

The first attempts to build a reversible network architecture based on residual connections were made in \textbf{RevNets} \citep{Gomez2017} and \textbf{iRevNets} \citep{jacobsen2018irevnet}. Their main motivation was to save memory during training and to stabilize computation.  The central idea is a variation of additive coupling functions: consider a disjoint partition of $\R^\flowdim = \R^d \times \R^{\flowdim-d}$ denoted by $\flowval = (\flowval^A,\flowval^B)$ for the input and $\obsval = (\obsval^A,\obsval^B)$ for the output, and define a function:
\begin{align}
    \obsval^A = \flowval^A + F(\flowval^B) \nonumber\\
\obsval^B = \flowval^B + G(\obsval^A),
\end{align}
where $F: \R^{\flowdim -d} \to \R^d$ and $ G: \R^d \to \R^{\flowdim -d}$ are residual blocks. This network is invertible (by re-arranging the equations in terms of $\flowval^A$ and $\flowval^B$ and reversing their order) but computation of the Jacobian is inefficient.

A different point of view on reversible networks comes from a dynamical systems perspective via the observation that a residual connection is a discretization of a first order ordinary differential equation (see Section \ref{s:infinitesimal} for more details).  \cite{Chang2018, chang2018antisymmetricrnn} proposed several architectures, some of these networks were demonstrated to be invertible. However, the Jacobian determinants of these networks cannot be computed efficiently.

Other research has focused on making the residual connection $\flowfunc(\placeholder)$ invertible. A sufficient condition for the invertibility was found in  \citep{Behrmann}. They proved the following statement:
\begin{proposition}
A residual connection \eqref{resnet} is invertible, if the Lipschitz constant of the residual block is $\text{Lip}(F) < 1$.
\end{proposition}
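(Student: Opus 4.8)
The plan is to recast invertibility as a fixed-point problem and invoke the Banach fixed-point theorem (the contraction mapping principle). To show $\flowfunc$ is invertible it suffices to show that for every target $\obsval \in \R^\flowdim$ the equation $\obsval = \flowval + F(\flowval)$ has a \emph{unique} solution $\flowval$. Rearranging, this is equivalent to finding a fixed point of the map $T_{\obsval}: \R^\flowdim \to \R^\flowdim$ defined by $T_{\obsval}(\flowval) = \obsval - F(\flowval)$, since $\flowval = T_{\obsval}(\flowval)$ holds precisely when $\flowfunc(\flowval) = \obsval$. Recognizing this reformulation is really the only conceptual step; once it is in place the rest is routine.

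First I would verify that $T_{\obsval}$ is a contraction. For any $\flowval_1, \flowval_2$ we have $T_{\obsval}(\flowval_1) - T_{\obsval}(\flowval_2) = F(\flowval_2) - F(\flowval_1)$, hence $\| T_{\obsval}(\flowval_1) - T_{\obsval}(\flowval_2)\| = \|F(\flowval_1) - F(\flowval_2)\| \leq \text{Lip}(F)\,\|\flowval_1 - \flowval_2\|$. Because $\text{Lip}(F) < 1$ by hypothesis, $T_{\obsval}$ is a contraction with modulus $\text{Lip}(F)$. Since $\R^\flowdim$ (with any fixed norm) is a complete metric space and $T_{\obsval}$ maps it into itself, the Banach fixed-point theorem yields a unique fixed point for each $\obsval$. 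Existence of the fixed point gives surjectivity of $\flowfunc$ and uniqueness gives injectivity; alternatively, injectivity follows directly, since $\flowfunc(\flowval_1)=\flowfunc(\flowval_2)$ forces $\|\flowval_1 - \flowval_2\| = \|F(\flowval_2)-F(\flowval_1)\| \leq \text{Lip}(F)\|\flowval_1-\flowval_2\|$ and thus $\flowval_1 = \flowval_2$. Therefore $\flowfunc$ is a bijection, i.e.\ invertible, and as a bonus the inverse at any point is computable by iterating $T_{\obsval}$ from an arbitrary start, with geometric convergence governed by $\text{Lip}(F)$.

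There is no serious analytic obstacle here: completeness of $\R^\flowdim$ is automatic and $T_{\obsval}$ is globally defined, so the contraction estimate is the whole of the work. The only points requiring care are bookkeeping ones. One must apply the Lipschitz hypothesis with respect to the same norm used for completeness (any norm works, as all norms on $\R^\flowdim$ are equivalent, though the numerical value of the contraction constant is norm-dependent), and one should note that the theorem delivers invertibility only as a set-theoretic bijection. If one further wants $\flowfunc$ to be a diffeomorphism, I would add the observation that $\flowfuncJ = \mathbb{1} + \jac{F}$ is nonsingular, which likewise follows from $\text{Lip}(F) < 1$ (the eigenvalues of $\jac{F}$ have modulus below $1$), so that the differentiability of $\flowinv$ is then guaranteed by the inverse function theorem.
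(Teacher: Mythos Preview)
Your proof is correct and matches the approach the paper itself points to: the paper does not spell out a full proof but immediately remarks that the inverse ``can be found numerically using fixed-point iterations (which, by the Banach theorem, converge if we assume $\text{Lip}(F) < 1$)'', which is precisely your contraction-mapping argument with $T_{\obsval}(\flowval) = \obsval - F(\flowval)$. Your write-up simply makes explicit the existence/uniqueness step that the paper leaves implicit.
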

There is no analytically closed form for the inverse, but it can be found numerically using fixed-point iterations (which, by the Banach theorem, converge if we assume  $\text{Lip}(F) < 1$).

Controlling the Lipschitz constant of a neural network is not simple.  The specific architecture proposed by  \cite{Behrmann}, called \textbf{iResNet}, uses a convolutional network for the residual block.
It constrains the spectral radius of each convolutional layer in this network to be less than one.

The Jacobian determinant of the iResNet cannot be computed directly, so the authors propose to use a (biased) stochastic estimate.  The Jacobian of the residual connection $\flowfunc$ in Equation \eqref{resnet} is: $\flowfuncJ = I + \jac{F} $. Because the function $F$ is assumed to be Lipschitz with $\text{Lip}(F) < 1$, one has: $ \left| \det (I + \jac{F}) \right| = \det (I + \jac{F})$.  Using the linear algebra identity, $\ln{\det \mathbf{A}} = \text{Tr}\ln{\mathbf{A}} $ we have:

\begin{equation}
   \ln{\left| \det \flowfuncJ \right| } =
   \ln{ \det (I + \jac{F})} = \text{Tr}( \ln {(I + \jac{F})}),
\end{equation}
 
\noindent Then one considers a power series for the trace of the matrix logarithm:
\begin{equation}
\label{logseries}
    \text{Tr}(\ln{ (I + \jac{F})}) = \sum_{k=1}^\infty (-1)^{k+1} \frac{\text{Tr} (\jac{F})^k}{k}.
\end{equation}
By truncating this series one can calculate an approximation to the log Jacobian determinant of $\flowfunc$. To efficiently compute each member of the truncated series, the Hutchinson trick was used.  This trick provides a stochastic estimation of of a matrix trace $\mathbf{A} \in \R^{\flowdim \times \flowdim}$, using the relation: $\text{Tr} \mathbf{A} = \mathbb{E}_{p(\mathbf{v})} [\mathbf{v}^T\mathbf{A}\mathbf{v}]$, where $\mathbf{v} \in \R^\flowdim$,  $\mathbb{E}[\mathbf{v}] = 0$, and $\text{cov}(\mathbf{v}) = I$. 

 Truncating the power series gives a biased estimate of the log Jacobian determinant (the bias depends on the  truncation error).
 An unbiased stochastic estimator was proposed by \cite{iresnetunbiased} in a model they called a \textbf{Residual flow}.
 The authors used a \emph{Russian roulette} estimator instead of truncation. 
Informally, every time one adds the next term $a_{n+1}$ to the partial sum $\sum_{i=1}^n a_i $ while calculating the series $\sum_{i=1}^\infty a_i$, one flips a coin to decide if the calculation should be continued or stopped.
During this process one needs to  re-weight terms for an unbiased estimate.

\subsection{Infinitesimal (Continuous) Flows}\label{s:infinitesimal}

The residual connections discussed in the previous section can be viewed as discretizations of a first order ordinary differential equation (ODE) \citep{haber2017, e2017}:
\begin{equation}
\label{ode}
 \frac{d}{dt}\flowval(t) = F(\flowval(t), \theta(t)), 
\end{equation}
where $F: \R^\flowdim \times \Theta \to \R^\flowdim$ is a function which determines the dynamic (the \emph{evolution function}), $\Theta$ is a set of parameters and $\theta: \R \to \Theta$ is a parameterization. The discretization of this equation (Euler's method) is 
\begin{equation}
 \flowval_{n+1} - \flowval_{n} = \varepsilon F(\flowval_n, \theta_n)  , 
\end{equation}
and this is equivalent to a residual connection with a residual block $\varepsilon F(\placeholder, \theta_n)$.

In this section we consider the case where we do not discretize but try to learn the continuous dynamical system instead. Such flows are called  \textit{infinitesimal} or \textit{continuous}.  We consider two distinct types. The formulation of the first type comes from ordinary differential equations, and of the second type from stochastic differential equations.

\subsubsection{ODE-based methods}
\label{s:ODE}

Consider an ODE as in Equation \eqref{ode}, where $t \in [0,1]$.  Assuming  uniform Lipschitz continuity in $\flowval$ and continuity in $t$, the solution exists (at least, locally) and, given an initial condition $\flowval(0) = \baseval $, is unique (Picard-Lindel{\"o}f-Lipschitz-Cauchy theorem \citep{ArnoldODE}). 
We denote the solution at each time $t$ as $\Phi^t(\baseval)$.

\begin{remark}
At each time $t$, $ \Phi^t(\placeholder): \R^\flowdim \to \R^\flowdim $ is a diffeomorphism and satisfies the group law: $\Phi^t \circ \Phi^s = \Phi^{t+s}$. Mathematically speaking, an ODE \eqref{ode} defines a one-parameter group of diffeomorphisms on $\R^\flowdim$. Such a group is called a smooth flow in dynamical systems theory and differential geometry \citep{intro_to_dynamical_systems}.
\end{remark}

When $t=1$, the diffeomorphism $\Phi^1(\placeholder) $ is called a \emph{time one map}. The idea to model a normalizing flow as a time one map  $\obsval =\flowfunc(\baseval) = \Phi^1(\baseval)$ was presented by \citep{chen2018neural} under the name \textbf{Neural ODE (NODE)}. 
From a deep learning perspective this can be seen as an ``infinitely deep'' neural network with input $\baseval$, output $\obsval$ and continuous weights $\theta(t)$.
The invertibility of such networks naturally comes from the theorem of the existence and uniqueness of the solution of the ODE.  

Training these networks for a supervised downstream task can be done by  the \emph{adjoint sensitivity method} which is the continuous analog of backpropagation. It computes the gradients of the loss function by solving a second (\emph{augmented}) ODE backwards in time.  For loss $L(\flowval(t))$, where $\flowval(t)$ is a solution of ODE \eqref{ode}, its sensitivity or adjoint is $\mathbf{a}(t) = \frac{dL}{d\flowval(t)}$. This is the analog of the derivative of the loss with respect to the hidden layer.
In a standard neural network, the backpropagation formula computes this derivative: $\frac{dL}{d\mathbf{h}_{n}} = \frac{dL}{d\mathbf{h}_{n+1}}\frac{d\mathbf{h}_{n+1}}{d\mathbf{h}_{n}}$. For ``infinitely deep'' neural network, this formula changes into an ODE: 

\begin{equation}\frac{d \mathbf{a}(t)}{d t} = -\mathbf{a}(t)\frac{d F(\flowval(t), \theta(t))}{d \flowval(t) }.
\end{equation}

For density estimation learning, we do not have a loss, but instead seek to maximize the log likelihood.  For normalizing flows, the change of variables formula is given by another ODE:
\begin{align}
\label{e:Pontryagin}
\frac{d}{d t} \log(p(\flowval(t)))
= -\tr\left( \frac{d F(\flowval(t))}{d\flowval(t)} \right).
\end{align}
Note that we no longer need to compute the determinant.  To train the model and sample from $\obsdensity$ we solve these ODEs, which can be done with any numerical ODE solver.

\citet{GrathwohlFFJORD:MODELS} used the Hutchinson estimator to calculate an unbiased stochastic estimate of the trace-term.  This approach which they termed \textbf{FFJORD} reduces the complexity even further. \cite{Finlay2020trainNODE} added two regularization terms into the loss function of FFJORD: the first term forces solution trajectories to follow straight lines with constant speed, and the second term is the Frobenius norm of the Jacobian. This regularization decreased the training time significantly and reduced the need for multiple GPUs.
An interesting side-effect of using continuous ODE-type flows is that one needs fewer parameters to achieve the similar performance.  For example, \citet{GrathwohlFFJORD:MODELS} show that for the comparable performance on CIFAR10, FFJORD uses less than $2\% $ as many parameters as Glow.

Not all diffeomorphisms can be presented as a time one map of an ODE (see \citep{intro_to_dynamical_systems, time_one_maps}).  For example, one necessary condition is that the map is {\em orientation preserving} which means that the Jacobian determinant must be positive.  This can be seen because the solution $\Phi^t$ is a (continuous) path in the space of diffeomorphisms from the identity map $\Phi^0 = Id $ to the time one map $\Phi^1$. Since the Jacobian determinant of a diffeomorphism is nonzero, its sign cannot change along the path. Hence, a time one map must have a positive Jacobian determinant.  For example, consider a map $f: \R \to \R$, such that $f(x) = -x$. It is obviously a diffeomorphism, but it can not be presented as a time one map of any ODE, because it is not orientation preserving.

 \cite{augmentedNeuralODE} suggested how one can improve Neural ODE in order to be able to represent a broader class of diffeomorphisms. Their model is called Augmented Neural ODE (\textbf{ANODE}). They add variables $\mathbf{\hat{x}}(t) \in \R^p$ and consider a new ODE:
 \begin{equation}
 \frac{d}{dt}\begin{bmatrix}\flowval(t) \\ \mathbf{\hat{x}}(t) \end{bmatrix} = \hat{F}\left(\begin{bmatrix}\flowval(t) \\ \mathbf{\hat{x}}(t) \end{bmatrix}, \theta(t) \right)   
 \end{equation}
with initial conditions $\flowval(0)=\baseval$ and $\mathbf{\hat{x}}(0)=0$. The addition of $\mathbf{\hat{x}}(t)$ in particular gives freedom for the Jacobian determinant to remain positive. As was demonstrated in the experiments, ANODE is capable of learning distributions that the Neural ODE cannot, and the training time is shorter. 
\cite{Zhang2019ode} proved that any diffeomorphism can be represented as a time one map of ANODE and so this is a universal flow.

A similar ODE-base approach was taken by \cite{Salman2018DeepDN} in Deep Diffeomorphic Flows. In addition to  modelling a path $ \Phi^t(\placeholder)$  in the space of all diffeomorphic transformations, for $t \in [0,1]$, they proposed geodesic regularisation in which longer paths are punished.

\subsubsection{SDE-based methods (Langevin flows)}

The idea of the Langevin flow is simple; we start with a complicated and irregular data distribution $\obsdensity(\obsval)$ on $\R^\flowdim$, and then mix it to produce the simple base distribution $\basedensity(\baseval)$. If this mixing obeys certain rules, then this procedure can be invertible. This idea was explored by \citet{Fokker-Plank-machine, Welling2011BayesianLV, Rezende2015, Sohl-DicksteinW15, Salimans2015, Jankowiak2018, Chen2018CTF}. We provide a high-level overview of the method, including the necessary mathematical background.

A stochastic differential equation (SDE) or It{\^o} process describes a change of a random variable $\flowval\in \R^\flowdim$ as a function of time $t \in \R_+$:
\begin{align}
\label{SDE}
    d \flowval (t) = b(\flowval (t), t)dt  +  \sigma(\flowval (t), t)dB_t,
\end{align}

\noindent where $b(\flowval, t) \in \R^\flowdim$ is the \textit{drift coefficient}, $\sigma(\flowval, t) \in \R^{\flowdim\times\flowdim}$ is the \textit{diffusion coefficient}, and $B_t$ is $\flowdim$-dimensional \textit{Brownian motion}.  
One can interpret the drift term as a deterministic change and the diffusion term as providing the stochasticity and mixing. Given some assumptions about these functions, the solution exists and is unique \citep{Oksendal}.

Given a time-dependent random variable $\flowval (t)$ we can consider its  density function $p(\flowval, t)$ and this is also time dependent. If $\flowval (t)$ is a solution of Equation \eqref{SDE}, its density function satisfies two partial differential equations describing the  forward and backward evolution \citep{Oksendal}.  The forward evolution is given by Fokker-Plank equation or Kolmogorov's forward equation:

\begin{equation}
   \resizebox{0.5\textwidth}{!}{$ \frac{\partial}{\partial t} p(\flowval, t) = - \nabla_\flowval \cdot (b(\flowval,t)p(\flowval, t)) + \sum_{i,j} \frac{\partial^2}{\partial \flowvali_i \partial \flowvali_j}D_{ij}(\flowval, t)p(\flowval, t),$}
\end{equation}

\noindent where $D = \frac{1}{2}\sigma \sigma^T$, with the initial condition $p(\placeholder, 0) = \obsdensity(\placeholder)$. The reverse is given by Kolmogorov's backward equation: 
\begin{equation}
 \resizebox{0.5\textwidth}{!}{$ - \frac{\partial}{\partial t} p(\flowval, t) = b(\flowval,t) \cdot \nabla_\flowval (p(\flowval, t)) + \sum_{i,j} D_{ij}(\flowval, t) \frac{\partial^2}{\partial \flowvali_i \partial \flowvali_j}p(\flowval, t), $}
\end{equation}

\noindent where $0<t<T$, and the initial condition is $p(\placeholder, T) = \basedensity(\placeholder)$.

Asymptotically the Langevin flow can learn any distribution if one picks the drift and diffusion coefficients appropriately \citep{Fokker-Plank-machine}. However this result is not very practical, because one needs to know the (unnormalized) density function of the data distribution.

One can see that if the diffusion coefficient is zero, the It{\^o} process reduces to the ODE \eqref{ode}, and the Fokker-Plank equation becomes a Liouville's equation, which is connected to Equation \eqref{e:Pontryagin} (see \cite{chen2018neural}). It is also equivalent to the form of the transport equation considered in  \cite{Jankowiak2018} for stochastic optimization.

 \cite{Sohl-DicksteinW15} and \cite{Salimans2015}  suggested using MCMC methods to model the diffusion. They considered discrete time $t = 0, \dots, T$. For each time $t$, 
 $\flowval^t$ is a random variable where $\flowval^0 = \obsval $ is the data point, and $\flowval^T = \baseval $ is the base point. The forward transition probability $q(\flowval^t | \flowval^{t-1} ) $ is taken to be either normal or binomial distribution with trainable parameters. 
  Kolmogorov's backward equation implies that the backward transition $p(\flowval^{t-1}| \flowval^{t}) $ must have the same functional form as the forward transition (\ie, be either normal or binomial). Denote: $q(\flowval^0) = \obsdensity(\obsval)$, the data distribution,  and $p(\flowval^T) = \basedensity(\baseval)$, the base distribution. Applying the backward transition to the base distribution, one obtains a new density $p(\flowval^0)$, which one wants to match with  $q(\flowval^0)$. Hence, the optimization objective is the log likelihood  $L = \int d\flowval^0 q(\flowval^0) \log p(\flowval^0) $. This is intractable, but one can find a lower bound as in variational inference. 

Several papers have worked explicitly  with the SDE \citep{Chen2018CTF, NSDE1, NSDE2, Liutkus2018SlicedWassersteinFN, Li2020SDE}. \cite{Chen2018CTF} use SDEs to create an interesting posterior for variational inference.  They sample a latent variable $\baseval_0$ conditioned on the input $\flowval$, and then evolve $\baseval_0$ with SDE. In practice this evolution is computed by discretization. By analogy to Neural ODEs, \textbf{Neural Stochastic Differential Equations}   were proposed \citep{NSDE1, NSDE2}. In this approach coefficients of the SDE are modelled as neural networks, and black box SDE solvers are used for inference.    To train Neural SDE one needs an analog of backpropagation, \cite{NSDE2} proposed the use of Kunita's theory of stochastic flows. Following this, \cite{Li2020SDE}  derived the adjoint SDE whose solution gives the gradient of the original Neural SDE.                 
 
 Note, that even though Langevin flows manifest nice mathematical properties, they have not found practical applications. In particular, none of the methods has  been tested on baseline datasets for flows.

\section{Datasets and performance}
\label{performance}

In  this section we discuss  datasets commonly used for training and testing normalizing flows. We provide  comparison tables of the results as they were presented in the corresponding papers. The list of the flows for which we post the performance results is given in Table \ref{table:FlowsSummary}.

\begin{table}[!t]
\centering
\caption{List of Normalizing Flows for which we show performance results.}
\label{table:FlowsSummary}
\begin{tabular}{ ||l|l|l|| }
 \hline
 Architecture & Coupling function & Flow name\\ [0.5ex] 
 \hline\hline
\multirow{5}{*}{Coupling, \ref{coupling}} &\
\multirow{2}{*}{Affine, \ref{s:affine}} &  RealNVP \\& & Glow \\ \cline{2-3}
 & Mixture CDF, \ref{s:CDF} & Flow++ \\ \cline{2-3}
 & \multirow{2.5}{*}{Splines, \ref{s:spline}} & quadratic (C) \\ & & cubic \\ & & RQ-NSF(C) \\  \cline{2-3}
 & Piecewise Bijective, \ref{piecewise-bijective} & RAD \\ \hline
\multirow{4}{*}{Autoregressive, \ref{s:autoregressive}} & Affine & MAF \\  \cline{2-3}
 & Polynomial, \ref{s:sos} & SOS \\ \cline{2-3}
 & \multirow{2}{*}{Neural Network, \ref{NAF} } & NAF \\ & & UMNN \\
 \cline{2-3} & \multirow{2}{*}{Splines} & quadratic (AR) \\ & & RQ-NSF(AR) \\ \hline
\multirow{2}{*}{Residual, \ref{s:residual}} & \multirow{2}{*}{ } & iResNet \\ \cline{3-3} & &  Residual flow\\  \hline
\multirow{1}{*}{ODE, \ref{s:ODE}} & \multirow{2}{*}{ } & FFJORD  \\  \hline
\end{tabular}
\end{table}

\subsection{Tabular datasets}

We describe datasets as they were preprocessed in \cite{Papamakarios2017} (Table \ref{table:TabularDims})\footnote{See https://github.com/gpapamak/maf}. These datasets are relatively small and so are a reasonable first test of unconditional density estimation models. All datasets were cleaned and de-quantized by adding uniform noise, so they can be considered samples from an absolutely continuous distribution. 

We use a collection of datasets from the UC Irvine machine learning repository \citep{uci}.
\begin{enumerate}
    \item POWER: a collection of electric power consumption measurements in one house over 47 months. 
    \item GAS: a collection of measurements from chemical sensors in several gas mixtures.
    \item HEPMASS: measurements from high-energy physics experiments aiming to detect particles with unknown mass.
     \item MINIBOONE: measurements from MiniBooNE experiment for observing  neutrino oscillations.
\end{enumerate}

In addition we consider the Berkeley segmentation dataset \citep{bsds} which contains segmentations of natural images. \cite{Papamakarios2017} extracted $8\times 8$ random monochrome patches from it.

\begin{table*}[!t]
\caption{Tabular datasets: data dimensionality and number of training examples.}
\label{table:TabularDims}
\centering
 \begin{tabular}{||c c c c c c||} 
 \hline
  & POWER & GAS & HEPMASS & MINIBOONE & BSDS300 \\ [0.5ex] 
 \hline\hline
 Dims & 6 & 8 & 21 & 43 & 63 \\ 
 \hline
 \#Train & $\approx 1.7$M & $\approx 800$K & $\approx 300$K & $\approx 30$K & $\approx 1$M\\ [1ex] 
 \hline
\end{tabular}
\end{table*}

In Table \ref{table:CompTabular} we compare performance of flows for these tabular datasets.  For experimental details, see the following papers:  RealNVP \citep{Dinh2017} and MAF \citep{Papamakarios2017}, Glow \citep{Kingma2018} and FFJORD
 \citep{GrathwohlFFJORD:MODELS}, NAF \citep{Huang2018NeuralFlows}, UMNN \citep{Wehenkel2019}, SOS \citep{Priyank2019}, Quadratic Spline flow and RQ-NSF \citep{Durkan2019Neural}, Cubic Spline Flow \citep{Durkan2019Cubic}. 
 
Table \ref{table:CompTabular} shows that universal flows (NAF, SOS, Splines) demonstrate relatively better performance.

\begin{table*}[!t]
\caption{Average test log-likelihood (in nats) for density estimation on tabular  datasets (higher the better). A number in parenthesis next to a flow indicates number of layers. MAF MoG is MAF with mixture of Gaussians as a base density.  }
\label{table:CompTabular}
\centering
 \begin{tabular}{||c c c c c c||} 
 \hline
  & \small{POWER} & \small{GAS} & \small{HEPMASS} & \small{MINIBOONE} & \small{BSDS300} \\ [0.5ex] 
 \hline\hline
 \small{MAF(5)} & 0.14\tiny{$\pm 0.01$} & 9.07\tiny{$\pm 0.02$} & -17.70\tiny{$\pm 0.02$} & -11.75\tiny{$\pm 0.44$}  & 155.69\tiny{$\pm 0.28$} 
 \\ 
 \hline
 \small{MAF(10)} & 0.24\tiny{$\pm 0.01$} & 10.08\tiny{$\pm 0.02$}  & -17.73\tiny{$\pm 0.02$} & -12.24\tiny{$\pm 0.45$} & 154.93\tiny{$\pm 0.28$} 
 \\ 
 \hline
 \small{MAF MoG} & 0.30\tiny{$\pm 0.01$} & 9.59\tiny{$\pm 0.02$} & -17.39\tiny{$\pm 0.02$}  & -11.68\tiny{$\pm 0.44$}  & 156.36\tiny{$\pm 0.28$} 
 \\ 
 \hline
 \small{RealNVP(5)} & -0.02\tiny{$\pm 0.01$} & 4.78\tiny{$\pm 1.8$} & -19.62\tiny{$\pm 0.02$} & -13.55\tiny{$\pm 0.49$} & 152.97\tiny{$\pm 0.28$}
 \\ 
 \hline
 \small{RealNVP(10)} & 0.17\tiny{$\pm 0.01$}  & 8.33\tiny{$\pm 0.14$} & -18.71\tiny{$\pm 0.02$}  & -13.84\tiny{$\pm 0.52$} & 153.28\tiny{$\pm 1.78$}
 \\ 
 \hline
  \small{Glow} & 0.17  & 8.15 & -18.92 & -11.35 & 155.07
 \\ 
 \hline
  \small{FFJORD} & 0.46 & 8.59  & -14.92 & -10.43 &
  157.40
 \\ 
 \hline
 \small{NAF(5)} & 0.62\tiny{$\pm 0.01$}  & 11.91\tiny{$\pm 0.13$} & -15.09\tiny{$\pm 0.40$}  & \textbf{-8.86}\tiny{$\pm 0.15$} & 157.73\tiny{$\pm 0.04$}
 \\ 
 \hline
  \small{NAF(10)} & 0.60\tiny{$\pm 0.02$}  & 11.96\tiny{$\pm 0.33$} & -15.32\tiny{$\pm 0.23$} & -9.01\tiny{$\pm 0.01$} & 157.43\tiny{$\pm 0.30$}
 \\ 
  \hline
  \small{UMNN} & 0.63\tiny{$\pm 0.01$}  & 10.89\tiny{$\pm 0.70$} & \textbf{-13.99}\tiny{$\pm 0.21$} & -9.67\tiny{$\pm 0.13$}  & \textbf{157.98}\tiny{$\pm 0.01$}
 \\ 
  \hline
  \small{SOS(7)} & 0.60\tiny{$\pm 0.01$} & 11.99\tiny{$\pm 0.41$} & -15.15\tiny{$\pm 0.10$} & -8.90\tiny{$\pm 0.11$}  & 157.48\tiny{$\pm 0.41$}
 \\

 \hline
  \small{Quadratic Spline (C)} & 0.64\tiny{$\pm 0.01$}  & 12.80\tiny{$\pm 0.02$} & -15.35\tiny{$\pm 0.02$} & -9.35\tiny{$\pm 0.44$} & 157.65\tiny{$\pm 0.28$}
  \\ 
  \hline
  \small{Quadratic Spline (AR)} & \textbf{0.66}\tiny{$\pm 0.01$}  & 12.91\tiny{$\pm 0.02$}  & -14.67\tiny{$\pm 0.03$} & -9.72\tiny{$\pm 0.47$} & 157.42\tiny{$\pm 0.28$}
  \\ 
 \hline
  \small{Cubic Spline} & 0.65\tiny{$\pm 0.01$}  & \textbf{13.14}\tiny{$\pm 0.02$}  & -14.59\tiny{$\pm 0.02$} & -9.06\tiny{$\pm 0.48$} & 157.24\tiny{$\pm 0.07$}
  \\ 
  \hline
  \small{RQ-NSF(C)} & 0.64\tiny{$\pm 0.01$}  & 13.09\tiny{$\pm 0.02$}  & -14.75\tiny{$\pm 0.03$} & -9.67\tiny{$\pm 0.47$} & 157.54\tiny{$\pm 0.28$}
  \\ 
   \hline
  \small{RQ-NSF(AR)} & \textbf{0.66}\tiny{$\pm 0.01$}  & 13.09\tiny{$\pm 0.02$}  & -14.01\tiny{$\pm 0.03$} & -9.22\tiny{$\pm 0.48$} & 157.31\tiny{$\pm 0.28$}
  \\ 
 [1ex] 
 \hline
\end{tabular}
\end{table*}

\subsection{Image datasets}

These datasets summarized in Table \ref{table:image}.  They are of increasing complexity and are preprocessed as in \cite{Dinh2017} by dequantizing with uniform noise (except for Flow++).

\begin{table}[!t]
\caption{Image datasets: data dimensionality and number of training examples for MNIST, CIFAR-10, ImageNet32 and ImageNet64 datasets.}
\label{table:image}
\centering
 \begin{tabular}{||c c c c c ||} 
 \hline
  & MNIST & CIFAR-10 & ImNet32 & ImNet64 \\ [0.5ex]
 \hline\hline
 Dims & 784 & 3072 & 3072 & 12288  \\ 
 \hline
 \#Train & 50K & $90$K & $\approx 1.3$M & $\approx 1.3$M \\ [1ex] 
 \hline
\end{tabular}
\end{table}

Table \ref{table:CompImage} compares performance on the image datasets for unconditional density estimation. For experimental details, see:  RealNVP for CIFAR-10 and ImageNet \citep{Dinh2017}, Glow for CIFAR-10 and ImageNet \citep{Kingma2018}, RealNVP and Glow for MNIST, MAF and FFJORD \citep{GrathwohlFFJORD:MODELS}, SOS \citep{Priyank2019}, RQ-NSF \citep{Durkan2019Neural}, UMNN \citep{Wehenkel2019}, iResNet \citep{Behrmann}, Residual Flow \citep{iresnetunbiased}, Flow++ \citep{ho2019flow}.

\begin{table}[!t]
\caption{Average test negative log-likelihood (in bits per dimension) for density estimation on image  datasets (lower is better).  }
\label{table:CompImage}
\centering
 \begin{tabular}{||c c c c c ||} 
 \hline
  & MNIST & CIFAR-10 & ImNet32 & ImNet64 \\ [0.5ex] 
 \hline\hline
 RealNVP & 1.06  & 3.49 & 4.28  & 3.98
 \\ 
 \hline
  Glow & 1.05  & 3.35 & 4.09 & 3.81
 \\ 
 \hline
 MAF & 1.89 & 4.31  & &
 \\ 
 \hline
  FFJORD & 0.99 & 3.40  & &
 \\ 
 \hline
 
  SOS & 1.81 & 4.18 & &
 \\ 
 \hline
  RQ-NSF(C) &    & 3.38 &  & 3.82 
  \\ 
  \hline
  UMNN & 1.13   &       &   &  
  \\ 
 \hline
  iResNet & 1.06 & 3.45  & &
 \\ 
 \hline
  Residual Flow \hspace{-1.2cm}& 0.97   & 3.28 & 4.01 & 3.76
  \\ 
 \hline
  Flow++ &   & \textbf{3.08}  & \textbf{3.86} & \textbf{3.69}
 \\ 
  [1ex] 
 \hline
\end{tabular}
\end{table}

As of this writing Flow++ \citep{ho2019flow} is the best performing approach. Besides using more expressive coupling layers (see Section \ref{s:CDF}) and a different architecture for the conditioner, variational dequantization was used instead of uniform.
An ablation study shows that the change in dequantization approach gave the most significant improvement.

\section{Discussion and open problems}
\label{discussion}

\subsection{Inductive biases}
\subsubsection{Role of the base measure}
The base measure of a normalizing flow is generally assumed to be a simple distribution (e.g., uniform or Gaussian).
However this doesn't need to be the case.
Any distribution where we can easily draw samples and compute the log probability density function is possible and the parameters of this distribution can be learned during training.

Theoretically the base measure shouldn't matter: any distribution for which a CDF can be computed, can be simulated by applying the inverse CDF to draw from the uniform distribution.
However in practice if structure is provided in the base measure, the resulting transformations may become easier to learn.
In other words, the choice of base measure can be viewed as a form of prior or inductive bias on the distribution and may be useful in its own right. For example, 
a trade-off between the complexity of the generative transformation and the form of base measure was explored in \citep{tails} in the context of modelling tail behaviour. 

\subsubsection{Form of diffeomorphisms} 
The majority of the flows explored are triangular flows (either coupling or autoregressive architectures).
Residual networks and Neural ODEs are also being actively investigated and applied.
A natural question to ask is: are there other ways to model diffeomorphisms which are efficient for computation? What inductive bias does the architecture impose? For instance, \cite{Spantini} investigate the relation between the sparsity of the triangular flow and Markov property of the target distribution.

A related question concerns the best way to model conditional normalizing flows when one needs to learn a conditional probability distribution. \cite{Trippe2018ConditionalDE} suggested using different flows for each condition, but this approach doesn't leverage weight sharing, and so is inefficient in terms of memory and data usage.  \cite{Atanov2019} proposed using affine coupling layers where the parameters $\theta$ depend on the condition. Conditional distributions are useful in particular for time series modelling, where one needs to find $p(\obsvali_t|\obsval_{<t})$ \citep{videoflow}.

\subsubsection{Loss function} The majority of the existing flows are trained by minimization of KL-divergence between source and the target distributions (or, equivalently, with log-likelihood maximization). However, other losses could be used which would put normalizing flows in a broader context of optimal transport theory \citep{transportation}. Interesting work has been done in this direction including Flow-GAN \citep{Grover2018} and the minimization of the Wasserstein distance as suggested by \citep{Arjovsky2017WassersteinG, Tolstikhin2018WassersteinA}.

\subsection{Generalisation to non-Euclidean spaces}\label{s:nonEuclid}
\subsubsection{Flows on manifolds.}  
Modelling probability distributions on manifolds has applications in many fields including robotics, molecular biology, optics, fluid mechanics, and plasma physics \citep{Gemici2016, Rezende2020tori}.  
How best to construct a normalizing flow on a general differentiable manifold remains an open question.
One approach to applying the normalizing flow framework on manifolds, is to find a base distribution on the Euclidean space and transfer it to the manifold of interest.
There are two main approaches: 1) embed the manifold in the Euclidean space and ``restrict" the measure, or 2) induce the measure from the tangent space to the manifold. We will briefly discuss each in turn.

One can also use differential structure to define measures on manifolds \citep{Spivak}.
Every differentiable and orientable  manifold $M$ has a volume form $\omega$, then for a Borel subset $U \subset M$ one can define its measure as $\mu_\omega(U) = \int_U \omega$. A Riemannian manifold has a natural volume form given by its metric tensor: $\omega = \sqrt{|g|}dx_1\wedge \dots \wedge dx_\flowdim$. \cite{Gemici2016} explore this approach considering an immersion of an $\flowdim$-dimensional manifold $M$ into a Euclidean space: $\phi: M \to \R^N$, where $N \ge \flowdim$.
In this case, one pulls-back a Euclidean metric, and locally a volume form on $M$ is $\omega = \sqrt{\det((\textrm{D}\phi)^T\textrm{D}\phi)} dx_1\wedge \dots \wedge dx_\flowdim$, where $\textrm{D}\phi$ is the Jacobian matrix of $\phi$.
\cite{Rezende2020tori} pointed out that the realization of this method is computationally hard, and proposed an alternative construction of flows on tori and spheres using diffeomorphisms of the one-dimensional circle as building blocks.  

As another option, one can consider exponential maps $\exp_x: T_xM \to M$, mapping a tangent space of a Riemannian manifold (at some point $x$) to the manifold itself.
If the manifold is geodesic complete, this map is globally defined, and locally is a diffeomorphism.
A tangent space has a structure of a vector space, so one can choose an isomorphism $T_xM \cong \R^\flowdim$. Then for a base distribution with the density $\basedensity$ on $\R^\flowdim$, one can push it forward on $M$ via the exponential map.
Additionally, applying a normalizing flow to a base measure before pushing it to $M$ helps to construct multimodal distributions on $M$. 
If the manifold $M$ is a hyberbolic space, the exponential map is a global diffeomorphism and all the formulas could be written explicitly. Using this method, \cite{hyperbolic2019} introduced the Gaussian reparameterization trick in a hyperbolic space and \cite{Bose2020hyperbolicflow} constructed hyperbolic normalizing flows.

Instead of a Riemannian structure, one can impose a Lie group structure on a manifold $G$.
In this case there also exists an exponential map $\exp: \mathfrak{g} \to G$ mapping a Lie algebra to the Lie group and one can use it to construct a normalizing flow on $G$.
\cite{lie2019} introduced an analog of the Gaussian reparameterization trick for a Lie group.

\subsubsection{Discrete distributions}
Modelling distributions over discrete spaces is important in a range of problems, however the generalization of normalizing flows to discrete distributions remains an open problem in practice.
Discrete latent variables were used by \citet{dinh2019rad} as an auxiliary tool to pushforward continuous random variables along piecewise-bijective maps (see Section \ref{piecewise-bijective}).
However, can we define normalizing flows if one or both of our distributions are discrete?
This could be useful for many applications including natural language modelling, graph generation and others. 

To this end \cite{Tran2019} model bijective functions on a finite set and show that, in this case, the change of variables is given by the formula: $\obsdensity(\obsval) = \basedensity(\flowfunc^{-1}(\obsval))$, \ie, with no Jacobian term (compare with Definition \ref{d:pushforward}).
For backpropagation of functions with discrete variables they use the straight-through gradient estimator \citep{Bengio2013StraightThrough}.
However this method is not scalable to distributions with large numbers of elements. 

Alternatively \cite{hoogeb2019lossless} models bijections on $\mathbb{Z}^\flowdim$ directly with additive coupling layers. Other approaches transform a discrete variable into a continuous latent variable with a variational autoencoder, and then apply normalizing flows in the continuous latent space \citep{Zizhuang2019, Ziegler2019}.

A different approach is dequantization, (\ie, adding noise to discrete data to make it continuous) which can be used with ordinal variables, \eg, discretized pixel intensities.
The noise can be uniform but other forms are possible and this dequantization can even be learned as a latent variable model \citep{ho2019flow, Hoogeboom2020dequant}. \cite{Hoogeboom2020dequant} analyzed how different choices of dequantization objectives and dequantization distributions affect the performance.

\ifCLASSOPTIONcompsoc
  \section*{Acknowledgments}
\else
  \section*{Acknowledgment}
\fi

The authors would like to thank Matt Taylor and Kry Yik-Chau Lui for their insightful comments.

\ifCLASSOPTIONcaptionsoff
  \newpage
\fi
\bibliographystyle{IEEEtranSN}

\begin{IEEEbiography}[{\includegraphics[width=1in,height=1.25in,clip,keepaspectratio]{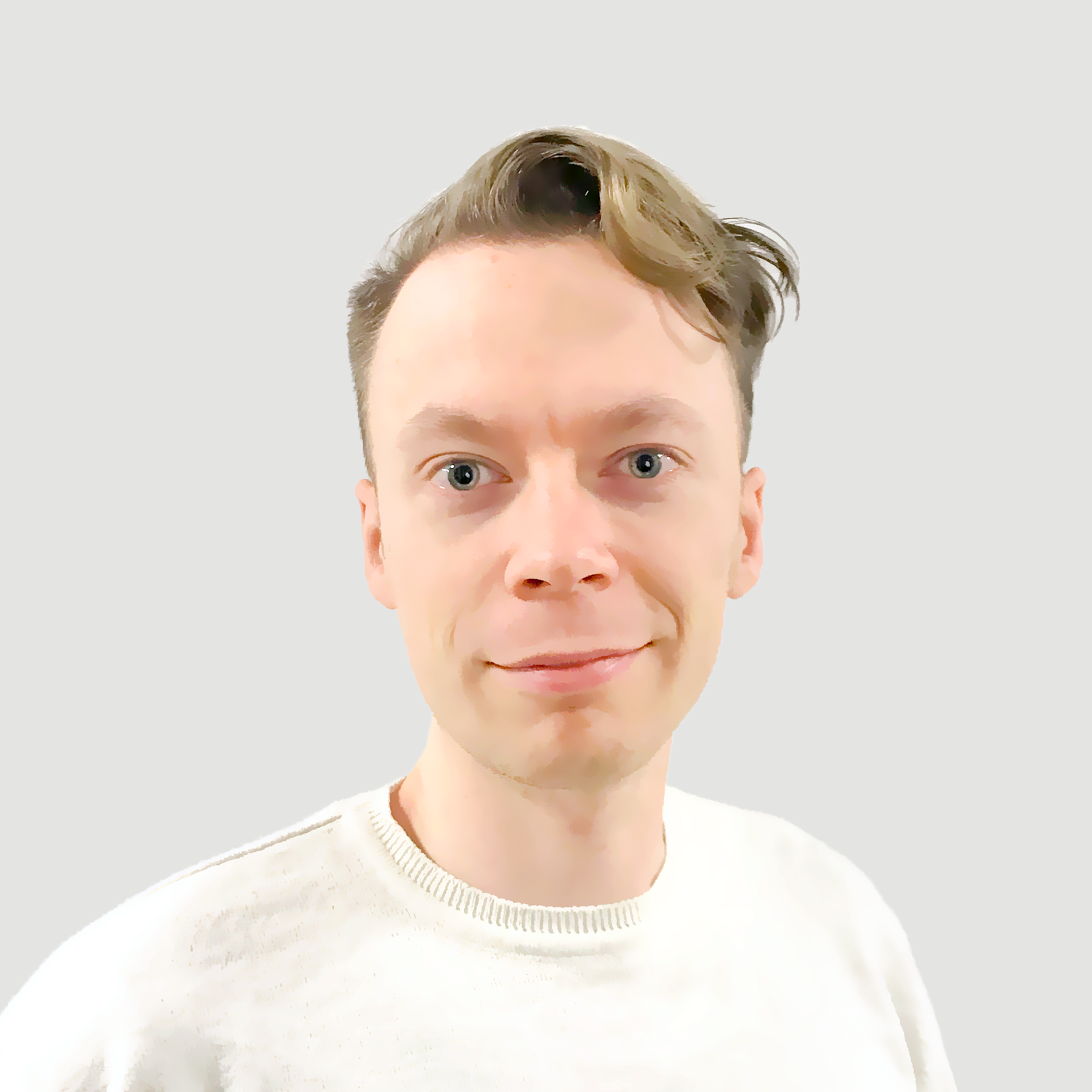}}]{Ivan Kobyzev}
Ivan Kobyzev received his Master’s degree in Mathematical Physics from St Petersburg State University, Russia, in 2011, and his PhD in Mathematics from Western University, Canada, in 2016. He did two postdocs in Mathematics and in Computer Science at the University of Waterloo. Currently he is a researcher at Borealis AI. His research interests include Algebra, Generative Models, Cognitive Computing, Natural Language Processing.   
\end{IEEEbiography}

\begin{IEEEbiography}[{\includegraphics[width=1in,height=1.25in,clip,keepaspectratio]{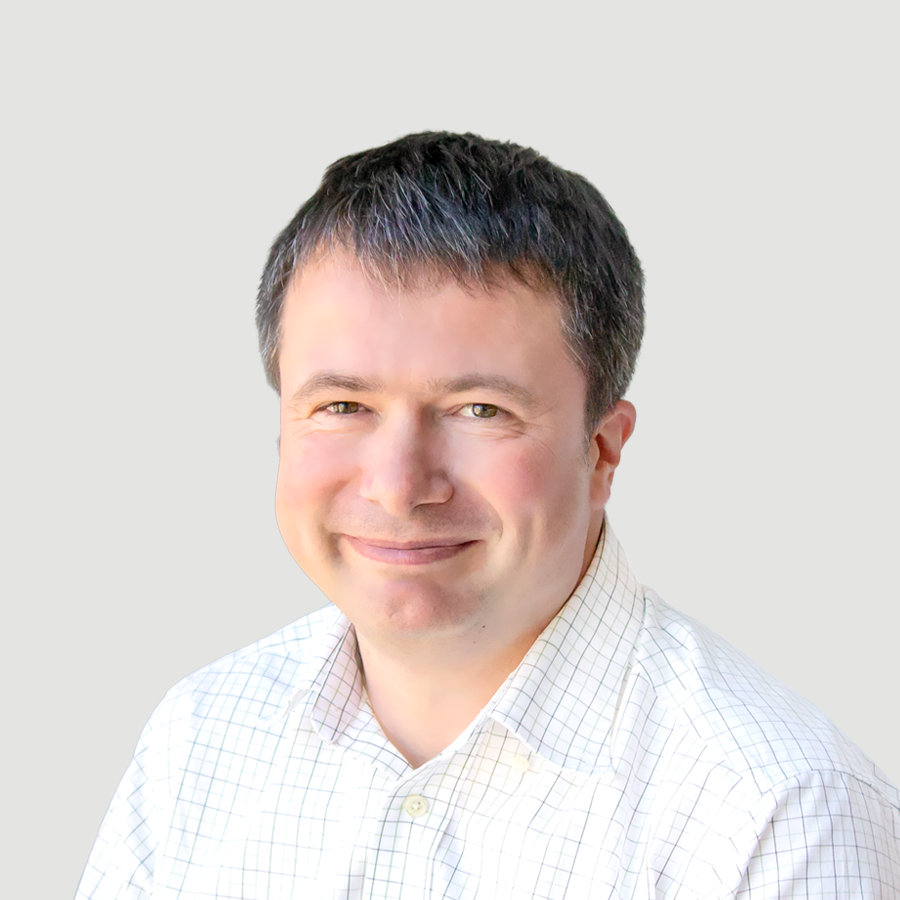}}]{Simon J.D. Prince}
Simon Prince holds a Masters by Research from University College London and a doctorate from the University of Oxford. He has a diverse research background and has published in wide-ranging areas including Computer Vision, Neuroscience, HCI, Computer Graphics, Medical Imaging, and Augmented Reality. He is also the author of a popular textbook on Computer Vision.  
From 2005-2012 Dr. Prince was a tenured faculty member in the Department of Computer Science at University College London, where he taught courses in Computer Vision, Image Processing and Advanced Statistical Methods. During this time, he was Director of the M.Sc. in Computer Vision, Graphics and Imaging. Dr. Prince worked in industry applying AI to computer graphics software.  Currently he is a Research Director of Borealis AI's Montreal office.
\end{IEEEbiography}

\begin{IEEEbiography}[{\includegraphics[width=1in,height=1.25in,clip,keepaspectratio]{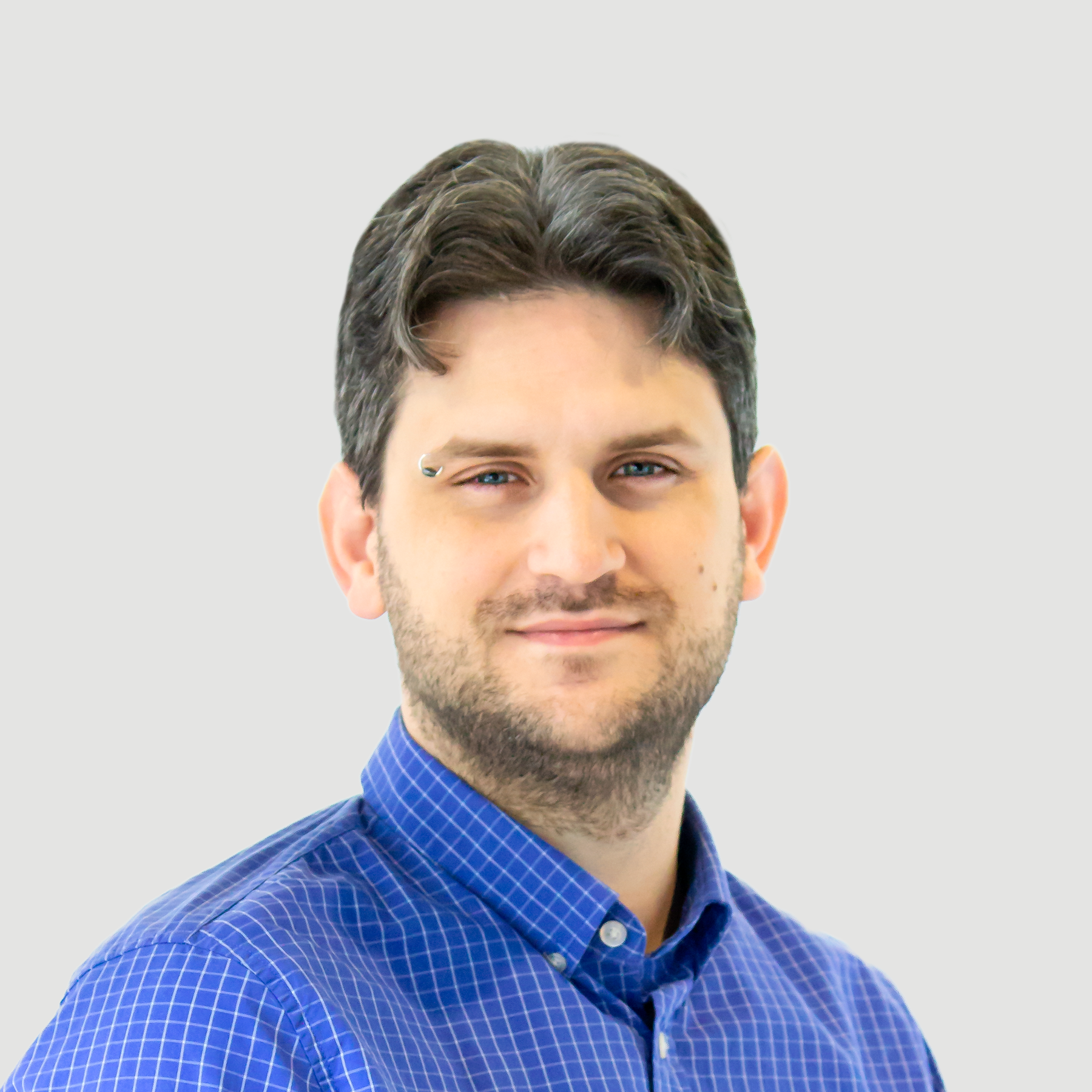}}]{Marcus A.\ Brubaker}
Marcus A.\ Brubaker received his PhD in 2011 at the University of Toronto. He did postdocs at the Toyota Technological Institute at Chicago, Toronto Rehabilitation Hospital and the University of Toronto. His research interests span computer vision, machine learning and statistics. Dr.\ Brubaker is an Assistant Professor at York University (Toronto, Canada), an Adjunct Professor at the University of Toronto and a Faculty Affiliate of the Vector Institute.
He is also an Academic Advisor to Borealis AI where he previously worked as the Research Director of the Toronto office.
He is currently an Associate Editor for the journal IET Computer Vision and has served as a reviewer and area chair for many computer vision and machine learning conferences.
\end{IEEEbiography}

\end{document}